%%%%%%%% ICML 2022 EXAMPLE LATEX SUBMISSION FILE %%%%%%%%%%%%%%%%%

\documentclass[nohyperref]{article}

% Recommended, but optional, packages for figures and better typesetting:
\usepackage{microtype}
\usepackage{subcaption}
\usepackage{graphicx}
\usepackage{lipsum}
\usepackage{booktabs} % for professional tables
\usepackage{nicematrix}
\usepackage{array,multirow,graphicx}

% hyperref makes hyperlinks in the resulting PDF.
% If your build breaks (sometimes temporarily if a hyperlink spans a page)
% please comment out the following usepackage line and replace
% \usepackage{icml2022} with \usepackage[nohyperref]{icml2022} above.
\usepackage{hyperref}
\usepackage{xurl}

% Attempt to make hyperref and algorithmic work together better:

% Use the following line for the initial blind version submitted for review:
\usepackage[accepted]{icml2022}

% If accepted, instead use the following line for the camera-ready submission:
% \usepackage[accepted]{icml2022}

% For theorems and such
\usepackage{amsmath}
\usepackage{amssymb}
\usepackage{mathtools}
\usepackage{amsthm}
\usepackage{dsfont}
\usepackage{xcolor}   
\usepackage{graphicx}

\usepackage{standalone}
\usepackage{fancyhdr}
\usepackage{algorithm}
\usepackage{algorithmic}

\usepackage{forloop}
\usepackage{times}
\usepackage{wrapfig}
\usepackage{siunitx}
\usepackage{url} 
\usepackage{bm}
\usepackage{nicefrac}
% if you use cleveref..
%\usepackage[capitalize,noabbrev]{cleveref}

%%%%%%%%%%%%%%%%%%%%%%%%%%%%%%%%
% THEOREMS
%%%%%%%%%%%%%%%%%%%%%%%%%%%%%%%%
%\theoremstyle{plain}
%\newtheorem{theorem}{Theorem}[section]
%\newtheorem{proposition}[theorem]{Proposition}
%\newtheorem{lemma}[theorem]{Lemma}
%\newtheorem{corollary}[theorem]{Corollary}
%\theoremstyle{definition}
%\newtheorem{definition}[theorem]{Definition}
%\newtheorem{assumption}[theorem]{Assumption}
%\theoremstyle{remark}
%\newtheorem{remark}[theorem]{Remark}

\newcommand{\punt}[1]{}

\usepackage{amsthm}
\usepackage{xcolor}
\usepackage{hyperref}

\newcommand{\trp}{{^\top}} % transpose

\renewcommand{\eqref}[1]{eq.~\ref{eq:#1}}
\newcommand{\Nrm}{\mathcal{N}}

\newcommand{\figref}[1]{Fig.~\ref{fig:#1}}  % use for citing figs
\newcommand{\secref}[1]{Sec.~\ref{sec:#1}}
 % use for citing secs
\newcommand{\tabref}[1]{Table ~\ref{tab:#1}}  % use for citing tables
  % use for citing algorithms

\newcommand{\suppsecref}[1]{Appendix Sec.~\ref{supp:#1}}  

% ----- Math definitions -------------------
\newcommand{\vx}{\mathbf{x}}

\newcommand{\vl}{\mathbf{l}}

\newcommand{\Dat}{\mathcal{D}}

\newcommand{\vf}{\mathbf{f}}

\newcommand{\vphi}{\mathbf{\ensuremath{\bm{\phi}}}}

\newcommand{\vk}{\mathbf{k}}

%\newcommand{\E}{\mathbb{E}}
 
 % one vector

\newcommand{\ve}{\mathbf{e}}

\newcommand{\vmu}{\mathbf{\ensuremath{\bm{\mu}}}}
\newcommand{\vtheta}{\mathbf{\ensuremath{\bm{\theta}}}}

\newcommand{\mI}{\mathbf{I}}

\newcommand{\vy}{\mathbf{y}}

\newcommand{\vg}{\mathbf{g}}

\newtheorem{thm}{Theorem}[section]
\newtheorem{lem}{Lemma}[section]

\newtheorem{prop}[thm]{Proposition}
\newtheorem{remark}{Remark}

\def\Pr{\ensuremath{\text{Pr}}}

\newcommand{\vh}{\mathbf{h}}

% Todonotes is useful during development; simply uncomment the next line
%    and comment out the line below the next line to turn off comments
%\usepackage[disable,textsize=tiny]{todonotes}
\usepackage[textsize=tiny]{todonotes}

% The \icmltitle you define below is probably too long as a header.
% Therefore, a short form for the running title is supplied here:
\icmltitlerunning{Hermite Polynomial Features for Private Data Generation}

\begin{document}

\twocolumn[

\icmltitle{Hermite Polynomial Features for Private Data Generation}

% It is OKAY to include author information, even for blind
% submissions: the style file will automatically remove it for you
% unless you've provided the [accepted] option to the icml2022
% package.

% List of affiliations: The first argument should be a (short)
% identifier you will use later to specify author affiliations
% Academic affiliations should list Department, University, City, Region, Country
% Industry affiliations should list Company, City, Region, Country

% You can specify symbols, otherwise they are numbered in order.
% Ideally, you should not use this facility. Affiliations will be numbered
% in order of appearance and this is the preferred way.
\icmlsetsymbol{equal}{*}

\begin{icmlauthorlist}
\icmlauthor{Margarita Vinaroz}{equal,mpi}
\icmlauthor{Mohammad-Amin Charusaie}{equal,mpi}
\icmlauthor{Frederik Harder}{mpi}
\icmlauthor{Kamil Adamczewski}{mpi}
\icmlauthor{Mi Jung Park}{ubc}
%\icmlauthor{}{sch}
%\icmlauthor{}{sch}
\end{icmlauthorlist}

\icmlaffiliation{mpi}{Max Planck Institute for Intelligent Systems, Tuebingen, Germany}
\icmlaffiliation{ubc}{University of British Columbia, Vancouver, Canada. CIFAR AI Chair at AMII}
% \icmlaffiliation{sch}{School of ZZZ, Institute of WWW, Location, Country}

\icmlcorrespondingauthor{Mi Jung Park}{mijungp@cs.ubc.ca}
% \icmlcorrespondingauthor{Mohammad-Amin Charusaie}{mcharusaie@tuebingen.mpg.de}

% You may provide any keywords that you
% find helpful for describing your paper; these are used to populate
% the "keywords" metadata in the PDF but will not be shown in the document
\icmlkeywords{Machine Learning, ICML}

\vskip 0.3in
]

% this must go after the closing bracket ] following \twocolumn[ ...

% This command actually creates the footnote in the first column
% listing the affiliations and the copyright notice.
% The command takes one argument, which is text to display at the start of the footnote.
% The \icmlEqualContribution command is standard text for equal contribution.
% Remove it (just {}) if you do not need this facility.

%\printAffiliationsAndNotice{}  % leave blank if no need to mention equal contribution
\printAffiliationsAndNotice{\icmlEqualContribution} % otherwise use the standard text.

\begin{abstract}
%This document provides a basic paper template and submission guidelines.Abstracts must be a single paragraph, ideally between 4--6 sentences long. Gross violations will trigger corrections at the camera-ready phase.

Kernel mean embedding is a useful tool to represent and compare probability measures. 
Despite its usefulness, kernel mean embedding considers infinite-dimensional features, which are challenging to handle in the context of \textit{differentially private} data generation.
A recent work \cite{dpmerf} proposes to approximate the kernel mean embedding of data distribution using \textit{finite-dimensional random features}, which yields analytically tractable sensitivity. 
% \ka{too much about dp-merf, it's rather not seen well to cite in an abstact}
% Besides, the privacy loss in DP-MERF is independent of the choice of a generator, as the only quantity to privatize is the approximate kernel mean embedding of the data distribution, compared to other private synthetic data generation paradigms using DP-SGD. 
However, the number of required random features is excessively high, often ten thousand to a hundred thousand, which worsens the privacy-accuracy trade-off. To improve the trade-off, we propose to replace random features with \textit{Hermite polynomial} features. 
Unlike the random features, the Hermite polynomial features are \textit{ordered}, 
where the features at the low orders contain more information on the distribution than those at the high orders. Hence, a relatively low order of Hermite polynomial features can more accurately approximate the mean embedding of the data distribution compared to a significantly higher number of random features. As demonstrated on several 
tabular and image datasets, Hermite polynomial features seem better suited for private data generation than random Fourier features.
\end{abstract}

\section{Introduction}

One of the popular distance metrics for generative modelling is \textit{Maximum Mean Discrepancy} (MMD) \cite{Gretton2012}. MMD computes the average distance between the realizations of two distributions mapped to a reproducing kernel Hilbert space (RKHS). Its popularity is due to several facts: (a) MMD can compare two probability measures in terms of all possible moments (i.e., infinite-dimensional features), resulting in no information loss due to a particular selection of moments; and (b) estimating MMD does not require the knowledge of the probability density functions. Rather, MMD estimators are in closed form, which can be computed by pair-wise evaluations of a kernel function using the points drawn from two distributions.
% the data and the synthetic data distributions.
 
However, 
% Despite such valuable features, KMEs intrinsically lack a tractable finite-dimensional form, which makes them unable to be released in their real form and during model training process. To address this issue, a recent work \cite{dpmerf} proposes to approximate the KME of the data distribution using finite-dimensional random Fourier features." 
%
using the MMD estimators 
for training a generator is not well suited when \textit{differential privacy} (DP) of the generated samples is taken into consideration. 
In fact, the generated points are updated in every training step and the pair-wise evaluations of the kernel function on generated and true data points require accessing data multiple times. One of the key properties of DP is composability that implies each access of data causes privacy loss.  Hence, privatizing the MMD estimator in every training step -- which is necessary to ensure the resulting generated samples are differentially private --  incurs a large privacy loss. 

A recent work \cite{dpmerf}, called \textit{DP-MERF}, uses a particular {form} of MMD via a \textit{random Fourier feature} representation \cite{rahimi2008random} of kernel mean embeddings for DP data generation. 
%  The random feature representation provides us with a finite-dimensional RKHS in which the MMD with the finite-dimensional feature representation approximates the MMD with the infinite-dimensional features in original RKHS. Henceforth, 
Under this representation, one can approximate the MMD in terms of two finite-dimensional mean embeddings (as in \eqref{MMD_rf}), where the approximate mean embedding of the true data distribution (data-dependent) is detached from that of the synthetic data distribution (data-independent). 
Thus, the data-dependent term needs privatization only once and can be re-used repeatedly during training of a generator.  
% Such privatization scheme significantly reduces the privacy cost in training a generator.
However, DP-MERF requires an excessively high number of random features to approximate the mean embedding of data distributions.  

% It is known that the random 

% Quasi-Monte Carlo (QMC) approximations instead, where the relevant integrands are evaluated on a low-discrepancy sequence of points
% as opposed to random point sets as in the Monte Carlo approach.

% I think it's worth mentioning existing methods that aim to reduce the number of random features. There exist previous works that also look at the quasi-random points.
% https://www.jmlr.org/papers/volume17/14-538/14-538.pdf

We propose to replace\footnote{There are efforts on improving the efficiency of randomized Fourier feature maps, e.g., by using quasi-random points in \cite{JMLR:v17:14-538}.} the random feature representation of the kernel mean embedding with the \textit{Hermite polynomial} representation.
%
% when using the Gaussian (squared-exponential) kernel\footnote{The Gaussian kernel is a extremely common in the literature of kernel methods. For instance, in all of the experiments in \cite{dpmerf} including heterogenous (continuous and discrete input variables) and homogenous (continuous or discrete) tabular datasets and image datasets, the Gaussian kernel was used.}. 
%
We observe that Hermite polynomial features are ordered where the features at the low orders contain more information on the distribution than those at the high orders.
% using Hermite polynomial features yields a more accurate approximation to the Gaussian kernel with a much smaller number of features, compared to the random features (see \figref{err_order}). 
% %
% As illustrated in our experiments, 
Hence, the required order of Hermite polynomial features is significantly lower than the required number of random features, for the similar quality of the kernel approximation (see \figref{err_order}). 
This is useful in reducing the \textit{effective sensitivity} of the data mean embedding. Although the sensitivity is $\frac{1}{m}$ in both cases with the number of data samples $m$ (see \secref{Methods}), adding noise to a vector of longer length (when using random features) has a worse signal-to-noise ratio, as opposed to adding noise to a vector of shorter length (when using Hermite polynomial features), if we require the norms of these vectors to be the same (for a limited sensitivity). 
% This exacerbates with small datasets as the sensitivity is large.
Furthermore, the Hermite polynomial features maintain a better signal-to-noise ratio as it contains more information on the data distribution, even when Hermite polynomial features are the same length as the random Fourier features

To this end, we develop a private data generation paradigm, called \textit{differentially private Hermite polynomials} (DP-HP), which utilizes a novel kernel which we approximate with Hermite polynomial features in the aim of effectively tackling the privacy-accuracy trade-off. In terms of three different metrics we use to quantify the quality of generated samples, our method outperforms the state-of-the-art private data generation methods at the same privacy level. What comes next describes relevant background information before we introduce our method. 
% % Our contributions are summarized below:
% \begin{itemize}
% \vspace{-0.2cm}
%     \item We propose to use the \textit{Hermite polynomial representation} of the kernel mean embeddings to reduce the effective sensitivity of the data mean embedding.
    
%     \item For multi-dimensional data, we propose a novel 
    
%     \item In terms of three different metrics we use to quantify the quality of generated samples, our method outperforms the state-of-the-art private data generation methods at the same privacy level. 
%     % negative log likelihood of generated samples under the ground truth model, $\alpha$-way marginals for accessing the correlations between different input dimensions in the generated samples, and classification accuracy on real data given 12 classifiers trained with generated samples.
%     % % \item not only evaluate the quality of the generated data in 12-downstream tasks, we also evaluate the diversity of generated data 
% \end{itemize}
% }

% What comes next describes relevant background information before introducing our method.  

\section{Background}
In the following, we describe the background on kernel mean embeddings and differential privacy.

\subsection{Maximum Mean Discrepancy} \label{sec:mmd}

% Given the samples drawn from two probability distributions: $X_{m}=\{x_{i}\}_{i=1}^{m} \sim P$ and $X'_{n}=\{x'_{i}\}_{i=1}^{n} \sim Q$, 
Given a positive definite kernel
$k\colon\mathcal{X}\times\mathcal{X}$, the MMD between two distributions $P,Q$ is defined as \cite{Gretton2012}: 
% \begin{align}\label{eq:pop_mmd}
$\mathrm{MMD}^2(P,Q)=
 \mathbb{E}_{x,x'\sim P}k(x,x')+\mathbb{E}_{y,y'\sim Q}k(y,y')
 -2\mathbb{E}_{x\sim P}\mathbb{E}_{y\sim Q}k(x,y).$
% \end{align}
%
According to the Moore--Aronszajn theorem \cite{aronszajn1950theory},
there exists a unique reproducing kernel Hilbert space of functions on $\mathcal{X}$ for which k is a reproducing kernel, i.e., $k(x,\cdot) \in \mathcal{H}$ and $f(x) = \langle f, k(x,\cdot) \rangle_\mathcal{H}$ for all $x\in \mathcal{X}$ and $f\in \mathcal{H}$, where $\left\langle \cdot,\cdot\right\rangle _{\mathcal{H}}=\left\langle \cdot,\cdot\right\rangle $
denotes the inner product on $\mathcal{H}$.
%
%
% there exists a unique reproducing kernel Hilbert
% space $\mathcal{H}$ on which $k$ defines an inner product. 
%
Hence, we can find a \textit{feature map},  $\phi\colon\mathcal{X}\to\mathcal{H}$
such that $k(x,y)=\left\langle \phi(x),\phi(y)\right\rangle _{\mathcal{H}}$, which allows us to rewrite MMD as \cite{Gretton2012}:
\begin{align}
\mathrm{MMD}^2(P,Q) & =\big\|\mathbb{E}_{x\sim P}[\phi(x)]-\mathbb{E}_{y\sim Q}[\phi(y)]\big\|^2_{\mathcal{H}}, \label{eq:norm2_mmd}
\end{align}
where $\mathbb{E}_{x\sim P}[\phi(x)]\in\mathcal{H}$ is known as the
(kernel) mean embedding of $P$, and exists if $\mathbb{E}_{x\sim P}\sqrt{k(x,x)}<\infty$
\cite{Smola2007}. 
If $k$ is
\textit{characteristic} \cite{Sriperumbudur2011}, then $P\mapsto\mathbb{E}_{x\sim P}[\phi(x)]$
is injective, meaning $\mathrm{MMD}(P,Q)=0$, if and only if $P=Q$. 
Hence, the MMD associated with a characteristic kernel (e.g., Gaussian kernel) can be interpreted as a distance
between the mean embeddings of two distributions.

Given the samples drawn from two distributions: $X_{m}=\{x_{i}\}_{i=1}^{m} \sim P$ and $X'_{n}=\{x'_{i}\}_{i=1}^{n} \sim Q$, we can estimate\footnote{This particular MMD estimator is biased.} the MMD by sample averages \cite{Gretton2012}:
% {\small
\begin{align}\label{eq:MMD_full}
&\widehat{\mathrm{MMD}}^2(X_{m},X'_{n}) = \tfrac{1}{m^2}\sum_{i,j=1}^{m}k(x_{i},x_{j})\nonumber \\
& \qquad +\tfrac{1}{n^2}\sum_{i,j=1}^{n}k(x'_{i},x'_{j}) 
 -\tfrac{2}{mn}\sum_{i=1}^{m}\sum_{j=1}^{n}k(x_{i},x'_{j}).
\end{align}
However, at $O(mn)$ the computational cost of $
\widehat{\mathrm{MMD}}(X_{m},X'_{n}) $ is prohibitive for large-scale datasets.

\begin{figure}
\centering
\includegraphics[width=0.8\columnwidth]{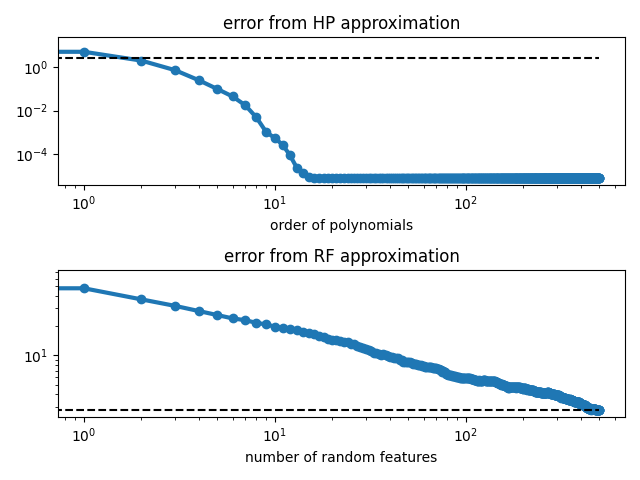}
\caption{\textbf{HP VS.\ RF features.} Dataset $X$ contains $N=100$ samples drawn from  $\Nrm(0,1)$ and $X'$ contains $N=100$ samples drawn from $\Nrm(1,1)$.
% We evaluated the Gram matrix for the Gaussian kernel $K(X, X')$. 
The error is defined by:
$ \frac{1}{N^2} \sum_{i=1}^N \sum_{j=1}^N | k(x_i,x_j') - \hat\vphi(x_i)\trp \hat\vphi(x_j')| $
where $\hat\vphi$ is either RF or HP features. 
 \textbf{Top}: The error decays fast when using HP features (\eqref{HP}).   \textbf{Bottom}: The plot shows the average error over $100 $ independent draws of RF features  (\eqref{RF}). The error decays slowly when using RF features. The best error (black dotted line) using $500$ RF features coincides with the error using HP features with order $2$ only.
    } \label{fig:err_order}
%   \vspace{-15pt}
\end{figure}

\subsection{Kernel approximation}\label{subsec:RFME}

% The above mentioned MMD estimator can be cheaply computed by 
By approximating the kernel function $k(x,x')$ 
with an inner product of finite dimensional feature vectors, i.e.,  $k(x,x')\approx \hat{\vphi}(x)^{\top}\hat{\vphi}(x')$ where $\hat{\vphi}(x)\in\mathbb{R}^{A}$ and $A$ is the number of
features,
the MMD estimator given in \eqref{MMD_full} can be computed in $O(m+n)$, i.e., linear in the sample size:
\begin{align}\label{eq:MMD_rf}
\widehat{\mathrm{MMD}}^{2}(P,Q)=\bigg\|\tfrac{1}{m}\sum_{i=1}^{m}\hat{\vphi}(x
_i)-\tfrac{1}{n}\sum_{i=1}^{n}\hat{\vphi}(x'_i)\bigg\|_{2}^{2}.
\end{align}
This approximation is also beneficial for private data generation: assuming $P$ is a data distribution and $Q$ is a synthetic data distribution, we can summarize the data distribution in terms of its kernel mean embedding (i.e., the first term on the right-hand side of \eqref{MMD_rf}), which can be privatized only once and used repeatedly during training of the generator which produces samples from $Q$. 

% \smallskip\noindent\textbf{Random Fourier features.}
\subsection{Random Fourier features.}
% \smallskip\noindent\textbf{Random Fourier features}
As an example of $\hat{\vphi}(\cdot)$,
the random Fourier features \cite{rahimi2008random} are derived from the following. 
Bochner's theorem \cite{Rudin2013} states that for any translation invariant kernel, the kernel can be written as $k(x,x')=\tilde{k}(x-x')
 =\mathbb{E}_{\omega\sim\Lambda}\cos(\omega^{\top}(x-x')).
$
By drawing
random frequencies $\{\omega_{i}\}_{i=1}^{A}\sim\Lambda$, where $\Lambda$ depends on the kernel,
(e.g., a Gaussian kernel $k$ corresponds to
normal distribution $\Lambda$), 
$\tilde{k}(x-x')$ can be
approximated with a Monte Carlo average. The resulting vector of random Fourier features (of length $A$) is given by 
\begin{align}\label{eq:RF}
    \hat{\vphi}_{RF, \boldsymbol{\omega}}(x)=(\hat{\phi}_{1, \boldsymbol{\omega}}(x),\ldots,\hat{\phi}_{A, \boldsymbol{\omega}}(x))^{\top}
\end{align} where 
$
    \hat\phi_{j, \boldsymbol{\omega}}(x)  = \sqrt{2/A}\;\cos(\omega_j\trp x), 
     \; \hat{\phi}_{j+A/2, \boldsymbol{\omega}}(x) =\sqrt{2/A}\sin(\omega_{j}^{\top}x), \nonumber 
$ for $j=1, \cdots, A/2$.

DP-MERF \cite{dpmerf} uses this very representation of  the feature map given in \eqref{RF}, and minimizes \eqref{MMD_rf} with a privatized data mean embedding to train a generator.  

\subsection{Hermite polynomial features.}
For another example of $\hat{\vphi}(\cdot)$, one could also start with the \textit{Mercer's theorem} (See \suppsecref{generalized}), which allows us to express a positive definite kernel $k$ in terms of the eigen-values $\lambda_i$ and eigen-functions $f_i$: $k(x, x') = \sum_{i=1}^\infty \lambda_i f_i(x) f_i^*(x')$, where $\lambda_i > 0$ and complex conjugate is denoted by $*$. The resulting \textit{finite-dimensional} feature vector is simply $\hat{\vphi}(x) = \hat\vphi_{HP}(x)
= [\sqrt{\lambda_0} f_0(x), \sqrt{\lambda_1} f_1(x), \cdots, \sqrt{\lambda_C} f_C(x)]$, where the cut-off is made at the $C$-th eigen-value and eigen-function.
For the commonly-used Gaussian kernel, $k(x, x') = \exp(-\frac{1}{2 l^2} (x-x')^2)$, where $l$ is the length scale parameter,  an analytic form of eigen-values and eigen-functions are available, where the eigen-functions are represented with Hermite polynomials (See \secref{Methods} for definition). This is the approximation we will use in our method. 
% \subsection{Kernel approximation with eigen-values and eigen-functions}\label{subsec:HPME}

% {\color{red} Amin: There are two things that I think needed to be clarified. What kind of convergence do we have for Mehler's formula? For instance, if we only have pointwise convergence, it is not sufficient to show that the mean embeddings are also converging to each other. At least from simple proofs that I have seen, they only showed pointwise, not uniform, convergence.}
% \mj{I don't think there is a uniform convergence under this approximation. Not sure if it's important to mention because if we use the sum kernel as MMD(P,Q) being zero doesn't mean P and Q are the same, even if there is a uniform convergence under this kernel.}

\subsection{Differential privacy}\label{subsec:DP}

Given privacy parameters $\epsilon \geq 0$ and $\delta \geq 0$, a mechanism $\mathcal{M}$ is  ($\epsilon$, $\delta$)-DP if the following equation holds:
$
\Pr[\mathcal{M}(\Dat) \in S] \leq e^\epsilon \cdot \Pr[\mathcal{M}(\Dat') \in S] + \delta,
$
 for all possible sets of the mechanism's outputs $S$ and all neighbouring datasets $\Dat$, $\Dat'$ differing by a single entry.
%
% A DP mechanism guarantees a limit on the amount of information revealed about any single individual's participation in the dataset. 
% Note that the strength of this guarantee crucially depends on the choice of $\epsilon$ and $\delta$ and may in fact be vacuous if either parameter has a large value, giving a false sense of security. 
% Conventionally, $\delta \leq \nicefrac{1}{N}$ and $\epsilon < 2$ are considered reasonable choices because larger values already allow for significant privacy loss, as illustrated in \cite{triastcyn2020bayesian}. 
%
In this paper, we use the \textit{Gaussian mechanism} to ensure the output of our algorithm is DP. Consider a function $h: \Dat \mapsto \mathbb{R}^p$, where we add noise for privacy and the level of noise is calibrated to the {\it{global sensitivity}}
\cite{dwork2006our}, $\Delta_h$, defined by the maximum difference in terms of $L_2$-norm $||h(\Dat)-h(\Dat') ||_2$, for neighbouring $\Dat$ and $\Dat'$ (i.e. $\Dat$ and $\Dat'$ have one sample difference by replacement). where the output is denoted by $\widetilde{h}(\Dat) = h(\Dat) + n$, where $n\sim \Nrm(0, \sigma^2 \Delta_h^2\mathbf{I}_p)$.
The perturbed function $\widetilde{h}(\Dat) $ is $(\epsilon, \delta)$-DP, where $\sigma$ is a function of $\epsilon$ and $\delta$ and can be numerically computed using, e.g.,  
% For a single application of the mechanism, $\sigma \geq \sqrt{2 \log (1.25/\delta)}/\epsilon$ holds for $\epsilon\leq 1$.
the auto-dp package by \cite{wang2019subsampled}.
% computes the relationship between $\epsilon, \delta, \sigma$ numerically, which we use in our method. 

% There are two important properties of DP.
% The \textit{composability} theorem \cite{dwork2006our} states that the strength of privacy guarantee degrades in a measurable way with repeated use of DP-algorithms. 

% The \textit{post-processing invariance} property of DP \cite{dwork2006our} implies that the composition of any data-independent mapping with an $(\epsilon,\delta)$-DP algorithm is also $(\epsilon,\delta)$-DP. So no analysis of the released synthetic data can yield more information about the real data than what our choice of $\epsilon$ and $\delta$ allows.

%%%%%%%%%%%%%%%%%%%%%%%%%%%%%%%%%%%%%%%%%
%%%%%%%   Methods %%%%%%%%%%%%%%%%%%%
%%%%%%%%%%%%%%%%%%%%%%%%%%%%%%%%%%%%%%%%%

% \mpsay{Add the Algorithms}

% What comes next describes our proposal for privacy-preserving data generation using Hermite polynomial features. We name our method by \textit{Differentially-private Hermite polynomials} (DP-HP). -- FH: I think this is self-explanatory if we just start the next section instead

% We first present the vanilla version of our algorithm called,  DP-MERF (differentially private mean embeddings with random features). 
% Then, we present two variants of this vanilla version for generating input and output pairs as well as for generating heterogenous data. 
% \section*{Organization}
% {\color{red} Amin: Add a roadmap of the ideas in paper.}

\section{Our method: DP-HP}\label{sec:Methods}

\subsection{Approximating the Gaussian kernel using Hermite polynomials (HP)}\label{sec:HP}
% Based on the Mercer's theorem and the \textit{Mehler formula} (below),  the Gaussian kernel can be represented as a weighted sum of Hermite polynomial features. 
% \smallskip\noindent\textbf{The Mehler formula}
Using the \textit{Mehler formula}\footnote{This formula can be also derived from the Mercer's theorem as shown in \cite{zhu1997gaussian, GPML}.} \cite{mehler1866ueber}, 
for $|\rho|<1$, we can write down the Gaussian kernel\footnote{The length scale $l$  in terms of $\rho$ is $\frac{1}{2 l^2} = \frac{\rho}{1-\rho^2}$.} as a weighted sum of Hermite polynomials
\begin{equation}\label{eq:mehler}
    \exp\left( - \frac{\rho}{1-\rho^2} (x-y)^2 \right) 
    = \sum_{c=0}^\infty \lambda_c f_c(x) f_c(y) 
\end{equation} where the $c$-th eigen-value is $\lambda_c=(1-\rho) \rho^c$ and the $c$-th eigen-function is defined by $f_c$, where 
$ 
    f_c(x) = \frac{1}{\sqrt{N_c}} H_c(x) \exp\left( - \frac{\rho}{1+\rho} x^2 \right), 
$
and 
$    N_c = 2^c c! \sqrt{\frac{1-\rho}{1+\rho}}$. 
Here, $H_c(x) = (-1)^c \exp(x^2)\frac{d^c}{dx^c}\exp(-x^2)$ is the $c$-th order physicist's Hermite polynomial. 

As a result of the Mehler formula, we can define a $C$-th order Hermite polynomial features as a feature map (a vector of length $C+1$):  
\begin{align}\label{eq:HP}
\hat\vphi^{(C)}_{HP}(x) 
= \left[ \sqrt{\lambda_0} f_0(x),   \cdots, \sqrt{\lambda_C} f_C(x) \right],
\end{align} and approximate the Gaussian kernel via
$
    \exp\left( - \frac{\rho}{1-\rho^2} (x-y)^2 \right) \approx \hat\vphi^{(C)}_{HP}(x)\trp \hat\vphi^{(C)}_{HP}(y).
$

This feature map provides us with a uniform approximation to the MMD in \eqref{norm2_mmd}, for every pair of distributions $P$ and $Q$ (see Theorem \ref{thm:marcer} and Lemma \ref{lem:unif_mmd} in \suppsecref{generalized}).

We compare
% We show 
the accuracy of this approximation 
% in comparison 
with random features in \figref{err_order}, where we fix the length scale to the median heuristic value\footnote{Median heuristic is a commonly-used heuristic to choose a length scale, which picks a value in the middle range (i.e., median) of $\|x_i - x_j \|$ for $1 \leq i,j \leq n$ for the dataset of $n$ samples.} in both cases. 
Note that the bottom plot shows the average error across $100$ independent draws of random Fourier features. 
% Different draws of the random features will produce slightly different fall-offs in the error. 
We observe that the error decay is significantly faster when using HPs than using RFs.
For completeness, we derive the kernel approximation error under HP features and random features for 1-dimensional data in \suppsecref{prf_conv}. 
Additionally, 
we visualize the effect of length scale on the error further in \suppsecref{compare_HP}. 
% We also show the comparison between HP and random features in \figref{feature_comparison}.

% \begin{figure*}[t]
% % To control the width of the figure, change the above width setting.
% % \vspace{-5pt}
% \centering
% % \vspace{-13mm}
% \includegraphics[width=0.7\linewidth]{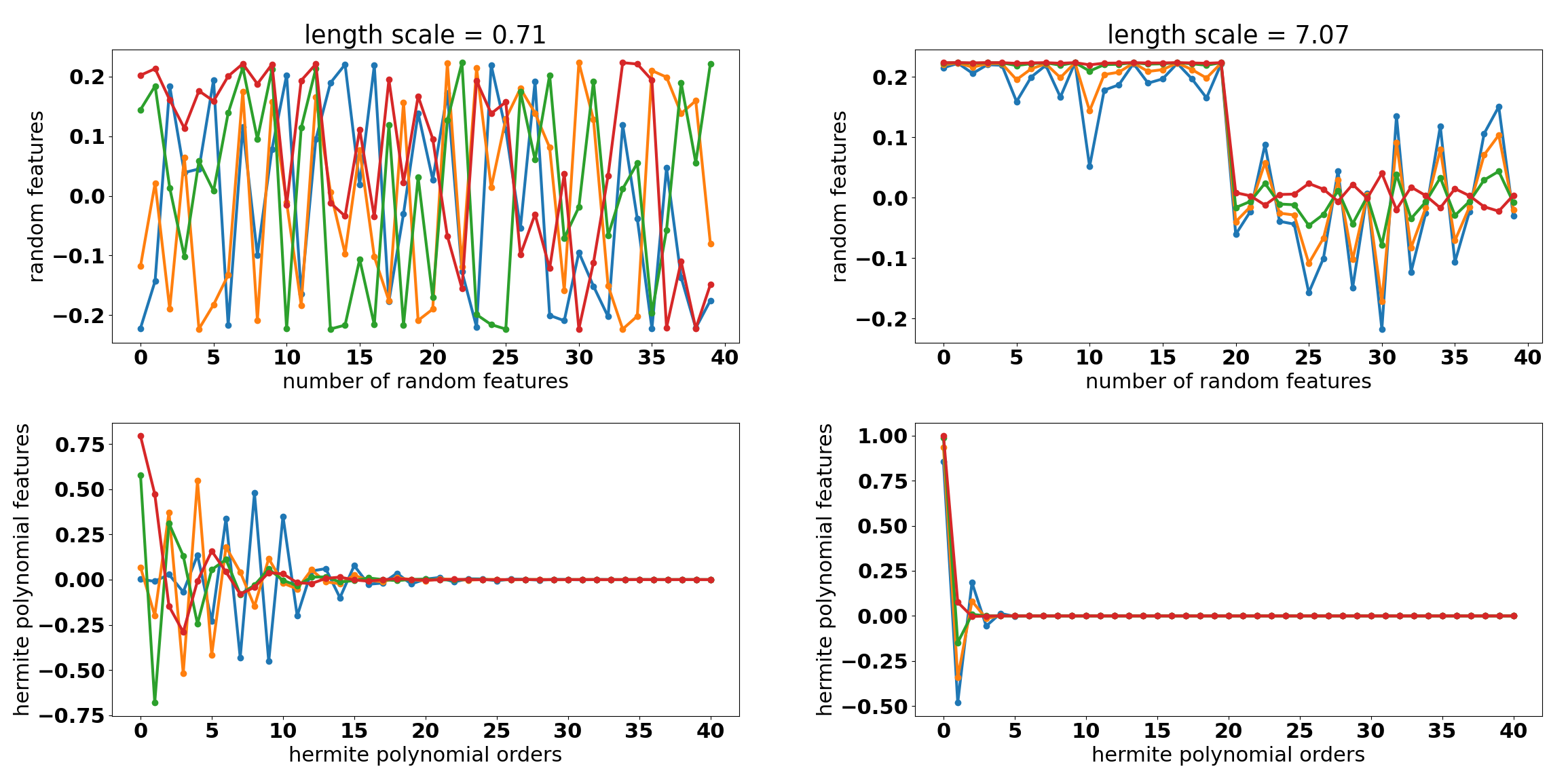}
% \caption{Comparison between HP and random features at a different length scale value. Different color indicates a different datapoint, where four datapoints are drawn from $\Nrm(0,1)$.
% \textbf{Left}: With length scale $l=0.71$ (relatively small compared to $1$), random features (top) at the four datapoints exhibit large variability while the Hermite polynomial features  (bottom) at those datapoints decay at around order $\leq 20$. 
% \textbf{Right}: With $l=7.07$ (large compared to $1$), random features (top) exhibit less variability, while it is not clear how many features are necessary to consider. On the other hand, the Hermite polynomial features (bottom) decay fast at around order $\leq 5$ and we can make a cut-off at that order without losing much information.  
%   }
%   \label{fig:feature_comparison}
% %   \vspace{-15pt}
% \end{figure*}

% in the following theorem:

\paragraph{Computing the Hermite polynomial features.}
Hermite polynomials follow the recursive definition: $H_{c+1}(x) = 2x H_c(x)  - 2c  H_{c-1}(x)$. At high orders, the polynomials take on large values, leading to numerical instability. So we compute the re-scaled term $\phi_c = \sqrt{\lambda_c} f_c$ iteratively using a similar recursive expression given in \suppsecref{recursion}.

\subsection{Handling multi-dimensional inputs}\label{sec:sum_kernel}

% \smallskip\noindent\textbf{Why the sum kernel?}
%

\subsubsection{Tensor (or outer) product kernel}
The Mehler formula holds for 1-dimensional input space.
For $D$-dimensional inputs  $\vx, \vx' \in \mathbb{R}^D$, where $\vx = [x_1, \cdots, x_D]$ and $\vx' = [x'_1, \cdots, x'_D]$, the \textit{generalized Hermite Polynomials} 
(Proposition \ref{prop:mahal} and Remark \ref{rem: unif_gen} in \suppsecref{generalized}) allows us to represent the multivariate Gaussian kernel $k(\vx, \vx')$ by
a tensor (or outer) products of the Gaussian kernel defined on each input dimension, where the coordinate-wise Gaussian kernel is approximated with Hermite polynomials:
\begin{align}
    k(\vx, \vx') &= k_{X_1} \otimes k_{X_2} \cdots \otimes k_{X_D} =  \prod_{d=1}^D k_{X_d}(x_d, x'_d), \nonumber \\
    &\approx \prod_{d=1}^D  \hat{\vphi}_{HP}^{(C)}(x_{d})\trp  \hat{\vphi}_{HP}^{(C)}(x_{d}),  
\end{align} where $\hat{\vphi}_{HP}^{(C)}(.)$\footnote{One can let each coordinate's Hermite Polynomials $\phi_{HP, d}^{(C)}(x_d)$ take different values of $\rho$, which determine a different level of fall-offs of the  eigen-values and a different range of values of the eigen-functions. Imposing a different cut-off $C$ for each coordinate is also possible.} is defined in \eqref{HP}. 
The corresponding feature map, from $k(\vx, \vx') \approx \vh_p(\vx) \trp \vh_p(\vx')$, is written as 
\begin{align}\label{eq:full_feature_map_prod}
    &\vh_p(\vx) \nonumber \\
    &=   \mbox{vec} \left[  
    \hat{\vphi}_{HP}^{(C)}(x_{1})  \otimes
    \hat{\vphi}_{HP}^{(C)}(x_{2})  \otimes
    \cdots
    \hat{\vphi}_{HP}^{(C)}(x_{D})\right]  
\end{align} where $\otimes$ denotes the tensor (outer) product  and $\mbox{vec}$ is an operation that vectorizes a tensor. 
The size of the feature map is $(C+1)^D$, where $D$ is the input dimension of the data and $C$ is the chosen order of the Hermite polynomials.
% (See \suppsecref{intractable} for a detailed description). 
%
This is prohibitive for the datasets we often deal with, e.g., for MNIST ($D=784$) with a relatively small order (say $C=10$), the size of feature map is $11^{784}$, impossible to fit in a typical size of memory. 

In order to handle high-dimensional data in a computationally feasible manner, we propose the following approximation. First we subsample input dimensions where the size of the selected input dimensions is denoted by $D_{prod}$. We then compute the feature map only on those selected input dimensions denoted by  $\vx^{D_{prod}}$. We repeat these two steps during training. 
The size of the feature map becomes $(C+1)^{D_{prod}}$, significantly lower than $(C+1)^D$ if $D_{prod} \ll D$. 
What we lose in return is the injectivity of the Gaussian kernel on the full input distribution, as we compare two distributions in terms of selected input dimensions.
We need a quantity that is more computationally tractable and also helps  distinguishing two distributions, which we describe next.

%%%%%%%%%% earlier comments I left to Amin %%%%%%%%%%%%%%%%%%%
% \mj{Write a lemma on the expected error in the mean embedding approximation, something like:
% \begin{align}
%     &\mathbb{E}_{D_{prod}} \left|\frac{1}{m}\sum_{i=1}^m\vh(\vx_i^{D_{prod}})- \frac{1}{m}\sum_{i=1}^m\vh(\vx_i)\right| \nonumber \\
%     & \leq O\left(\frac{D}{D_{prod}}\right) \mbox{ or,  } O\left(|D -D_{prod}|\right)
% \end{align} and say when $D_{prod}$ goes to $D$, the error decays with some rate. Is this do-able? @Amin?
% }
% \mj{Then, we say, because we still have a large gap between the mean embedding with subsampled input dimension and without for high-dimensional datasets, we need some other quantity that can compare the mean embeddings of the two distributions (synthetic and real data distributions). Then, describe the sum kernel which keeps the two match in terms of the marginals while product kernel matches higher order statistics between the two at random}

\subsubsection{Sum kernel}
Here, we define another kernel on the joint distribution over $(x_1, \cdots, x_D)$.
% $\chi_1 \times \cdots \times \chi_D$, where $ x_d, x_d' \in \chi_d$ for $d = 1,\cdots, D$. by a sum of Gaussian kernels:
% if we sum up the Gaussian kernels defined on each coordinate of the input, we arrive at the  \textit{sum kernel},
% 
The following kernel is formed by defining a 1-dimensional Gaussian kernel on each of the input dimensions:
\begin{align}\label{eq:sum_kernel}
    \tilde{k}(\vx, \vx') 
   &=   \tfrac{1}{D} \left[ k_{X_1}(x_1, x_1') +  \cdots + k_{X_D}(x_D, x'_D)\right], \nonumber \\
   &= \tfrac{1}{D} \sum_{d=1}^D k_{X_d}(x_d, x'_d), \nonumber \\
   &\approx \tfrac{1}{D} \sum_{d=1}^D \hat{\vphi}_{HP}^{(C)}(x_{d})\trp  \hat{\vphi}_{HP}^{(C)}(x_{d}),
\end{align} where $\hat\vphi_{HP, d}^{(C)}(.)$ is given in \eqref{HP}. 
% by , i.e., 
% % \begin{align}
% $k_{X_d}(x_d, x'_d) \approx \hat\vphi_{HP, d}^{(C)}(x_d)\trp \hat\vphi_{HP,d}^{(C)}(x_d).$
% \end{align} 
%
% Then, we represent the kernel on each dimension using the Hermite polynomials by 
% \begin{align}
%   \tfrac{1}{D}\sum_{d=1}^D k_{X_d}(x_{d}, x'_{d})= 
%   \sum_{d=1}^D \left(\tfrac{1}{\sqrt{D}}\vphi^{(C)}_{HP}(x_{d})\right)\trp \left(\tfrac{1}{\sqrt{D}}\vphi^{(C)}_{HP}(x'_{d})\right).  
% \end{align}
%
% Using the sum kernel represented by the HP features, we  write down the MMD by 
% \begin{align}
% \widehat{\mathrm{MMD}}^{2}(P,Q)=\bigg\|\tfrac{1}{m}\sum_{i=1}^{m}\vh(\vx
% _i)-\tfrac{1}{n}\sum_{i=1}^{n}\vh(\vx'_i)\bigg\|_{2}^{2},  
% \end{align}
% where 
The corresponding feature map, from $\tilde{k}(\vx, \vx') \approx \vh_s(\vx) \trp \vh_s(\vx')$, is represented by 
\begin{align}\label{eq:feature_map}
    \vh_s(\vx) &=     \begin{bmatrix} 
    \hat\vphi^{(C)}_{HP,1}(x_{1})/\sqrt{D}   \\
    \hat\vphi^{(C)}_{HP,2}(x_{2})/\sqrt{D}   \\
    \vdots \\
     \hat\vphi^{(C)}_{HP,D}(x_{D})/\sqrt{D} 
    \end{bmatrix}   \in \mathbb{R}^{( (C+1) \cdot D) \times 1},
\end{align} where the features map is the size of $(C+1)D$.
For the MNIST digit data ($D=784$), with a relatively small order, say $C=10$, the size of the feature map is $11 \times {784}=8624$ dimensional, which is manageable compared to the size ($11^{784}$) of the feature map under the generalized Hermite polynomials. 
% \textbf{What does the sum kernel do?}
%
% For the computational tractability, in this paper we choose to use this sum kernel.   
% %

Note that the sum kernel does not approximate the Gaussian kernel defined on the joint distribution over all the input  dimensions. Rather, the assigned Gaussian kernel \textit{on each dimension is characteristic}. 
% In other words, for a pair of distributions $P$ and $Q$, the approximated MMD under the sum kernel - based on Hermite polynomial features - being small is equivalent to the small value of Gaussian MMD for marginal distributions $P_i$ and $Q_i$. 
The Lemma \ref{lem:marginal_mmd} in \suppsecref{sum_kernel} shows that by minimizing the approximate MMD between the real and synthetic data distributions based on feature maps given in \eqref{feature_map}, we assure that the marginal probability distributions of the synthetic data converges to those of the real data.

\subsubsection{Combined Kernel}

% The fore-mentioned sum kernel matches two distributions in terms of their marginals, while the product kernel matches two distributions in terms of a subset of input dimensions.

Finally we arrive at a new kernel, which comes from a sum of the two fore-mentioned kernels:
\begin{align}\label{eq:final_kernel}
k_c(\vx, \vx') = k(\vx, \vx') + \tilde{k}(\vx, \vx'),
\end{align} where $k(\vx, \vx')  \approx \vh_p(\vx^{D_{prod}})\trp \vh_p(\vx'^{D_{prod}})$ and $ \tilde{k}(\vx, \vx') \approx \vh_s(\vx)\trp\vh_s(\vx')$, and consequently the corresponding feature map  is given by 
% \mj{The final feature map looks like this:}
\begin{align}\label{eq:final_feature_map}
    \vh_c(\vx) &=    
    \begin{bmatrix} 
   \vh_p(\vx^{D_{prod}})\\
    \vh_s(\vx)
    \end{bmatrix}  
\end{align} where the size of the feature map is $ \mathbb{R}^{\left( (C+1)^{D_{prod}} + (C+1)\cdot D) \right ) \times 1}$. 

\textbf{Why this kernel?} When $D_{prod}$ goes to $D$, the product kernel itself in \eqref{final_kernel} becomes characteristic, which allows us to reliably compare two distributions. However, for computational tractability, we are restricted to choose a relatively small $D_{prod}$ to subsample the input dimensions, which forces us to lose information on the distribution over the un-selected input dimensions. The use of sum kernel is to provide extra  information on the un-selected input dimensions at a particular training step. Under our kernel in \eqref{final_kernel}, every input dimension's marginal distributions are compared between two distributions in all the training steps due to the sum kernel, while some of the input dimensions are chosen to be considered for more detailed comparison (e.g., high-order correlations between selected input dimensions) due to the outer product kernel.

% \mj{add more explanation to give intuition on this particular choice}
% fh: removed this paragraph because I don't think it is so crucial to our argument. the fact that DP-HP doesn't assume independence isn't necessarily an advantage, since models assuming independence are at least guaranteed to do well if this assumption holds. we are instead left with whatever implicit priors our generator has. As a result I don't think we should dwell on this too much. 
% Note that comparing between two distributions in terms of their marginal distributions should not be confused with treating each input dimension to be independent (i.e., the sum kernel does not imply $p(x_1, \cdots, x_d) = p(x_1) \cdots p(x_d)$). When we compute the marginal distribution over a variable, we take into account the joint distribution over the other variables. For instance, consider a $2$-dimensional input distribution $p(x_1, x_2)$. To compute $p(x_1)$, we need to marginalize out $x_2$, as $p(x_1) = \int p(x_1,x_2) dx_2$, which is used to compute the mean embedding of the first coordinate, and vice versa. 

% Comparing two input distributions in terms of the product of  the marginals could be seen as restrictive.  While using the sum kernel we do not explicitly learn the dependencies among different dimensions of the input, when data exhibits a strong dependencies, e.g., in image data where neighbouring pixels are more correlated than others that are far from each other, one can choose to use a generator such as a CNN to induce such dependencies in the generated inputs. 

\subsection{Approximate MMD for classification}
\label{sec:approx_mmd_for_classification}
% \smallskip\noindent\textbf{Approximate MMD for classification tasks}
For classification tasks, we define a mean embedding for the joint distribution over the input and output pairs $(\vx, \vy)$, with the particular feature map given by $\vg$
\begin{align}\label{eq:me_classification}
     \widehat{\vmu}_{P_{\vx, \vy}}(\Dat) &=  \tfrac{1}{m}\sum_{i=1}^{m}\mathbf{g}(\vx_i, \vy_i).
    %  = \tfrac{1}{m}\sum_{i=1}^{m} \vh_c(\vx_i) \vf(\vy_i)^T 
\end{align} Here, we define the feature map as an outer product between the input features represented by \eqref{final_feature_map} and the output labels represented by one-hot-encoding $\vf(\vy_i)$:
\begin{align}\label{eq:feature_map_classification}
\mathbf{g}(\vx_i, \vy_i) = \vh_c(\vx_i) \vf(\vy_i)^T. 
\end{align} Given \eqref{feature_map_classification}, we further decompose \eqref{me_classification} into two, where the first term corresponds to the outer product kernel denoted by $\widehat{\vmu}^p_P$  and the second term corresponds to the sum kernel denoted by $\widehat{\vmu}^s_P$:
\begin{align}\label{eq:final_feature_map_two_terms}
   \widehat{\vmu}_{P_{\vx, \vy}} =&    
    \begin{bmatrix} 
   \widehat{\vmu}^p_P \\
    \widehat{\vmu}^s_P
    \end{bmatrix} 
    = \begin{bmatrix} \tfrac{1}{m}\sum_{i=1}^{m}\vh_p(\vx_i^{D_{prod}})\vf(\vy_i)^T \\
    \tfrac{1}{m}\sum_{i=1}^{m} \vh_s(\vx_i) \vf(\vy_i)^T 
    \end{bmatrix}. 
\end{align} 

Similarly, we define an approximate mean embedding of the synthetic data distribution by $\widehat{\vmu}_{Q_{\vx', \vy'}}(\Dat'_\vtheta) =  \tfrac{1}{n}\sum_{i=1}^{n}\mathbf{g}(\vx'_i(\vtheta), \vy'_i(\vtheta))$,  where $\vtheta$ denotes the parameters of a synthetic data generator.
Then, the approximate MMD is given by: 
% \begin{align}\label{eq:mmd_classification}
$\widehat{\mathrm{MMD}}_{HP}^{2}(P,Q)=||\widehat{\vmu}_{P_{\vx, \vy}}(\Dat)-\widehat{\vmu}_{Q_{\vx', \vy'}}(\Dat'_\vtheta)||_{2}^{2}
= ||\widehat{\vmu}^p_P - \widehat{\vmu}^p_{Q_\vtheta}||_{2}^{2} + ||\widehat{\vmu}^s_P - \widehat{\vmu}^s_{Q_\vtheta} ||_{2}^{2}. $
% \end{align} 
In practice, we minimize the augmented approximate MMD:
\begin{align}\label{eq:aug_mmd_classification}
\min_{\vtheta } \; \gamma ||\widehat{\vmu}^p_P - \widehat{\vmu}^p_{Q_\vtheta}||_{2}^{2} + ||\widehat{\vmu}^s_P - \widehat{\vmu}^s_{Q_\vtheta} ||_{2}^{2}.
\end{align} where $\gamma$ is a positive constant (a hyperparameter) that helps us to deal with the scale difference in the two terms (depending on the selected HP orders and subsampled input dimensions) and also allows us to give a different importance on one of the two terms. 
We provide the details on how $\gamma$ plays a role and whether the algorithm is sensitive to $\gamma$ in \secref{experiments}.
% maybe not so necessary to say this here? 
% Note that the approximate mean embedding $\widehat{\vmu}$ is a concatenation of columns, where each column consists of the mean embedding of the input distribution corresponding to each class.
%
Minimizing \eqref{aug_mmd_classification} yields a synthetic data distribution over the input and output, which minimizes the discrepancy in terms of the particular feature map \eqref{final_feature_map_two_terms}  between synthetic and real data distributions. 
% The discrepancy is expressed  
% . 

% ensures that all synthetic class distributions match their real data counterpart in terms of the approximate mean embedding.

% we don't minimize the distance among two distributions here, but we minimize the additive losses (discrepancy) of distributions (real and synthetic) corresponding to each class.

%

% \mj{start here tomorrow:}

\subsection{Differentially private  data samples}

For obtaining privacy-preserving synthetic data, 
all we need to do is privatizing  $\widehat{\vmu}^p_P$  and  $\widehat{\vmu}^s_P$  given in \eqref{final_feature_map_two_terms}, then training a generator. 
% as they are the only terms that are data dependent in \eqref{aug_mmd_classification}. 
We use the Gaussian mechanism to privatize both terms. 
See \suppsecref{sensitivity} for  sensitivity analysis.
Unlike $\widehat{\vmu}^s_P$ that can be privatized only and for all, we need to privatize $\widehat{\vmu}^p_P$ every time we redraw the subsampled input dimensions. We split a target $\epsilon$ into two such that $\epsilon = \epsilon_1 + \epsilon_2$ (also the same for $\delta$), where $\epsilon_1$ is used for privatizing  $\widehat{\vmu}^s_P$ and $\epsilon_2$ is used for privatizing 
$\widehat{\vmu}^p_P$. 
We further compose the privacy loss incurred in privatizing $\widehat{\vmu}^p_P$ during training by the analytic moments accountant \cite{wang2019subsampled}, which returns the privacy parameter $\sigma$ as a function of $(\epsilon_2, \delta_2)$. 
In the experiments, we subsample the input dimensions for the outer product kernel in every epoch as opposed to in every training step for an economical use of $\epsilon_2$.

 \begin{figure*}[htb]
    \centering
    % \includestandalone[mode=buildnew, scale=1.]{figs/syn2d_plot.tex}
    \includegraphics[scale=1.]{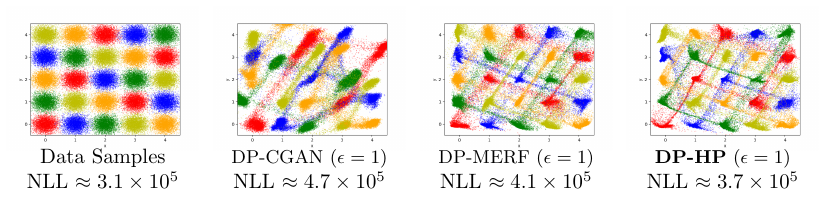}
    \caption{Simulated example from a Gaussian mixture.  \textbf{Left}:  Data samples drawn from a Gaussian Mixture distribution with 5 classes (each color represents a class). NLL denotes the negative log likelihood of the samples given the true data distribution. 
    \textbf{Middle-Left}:
    Synthetic data generated by DP-CGANs at $\epsilon=1$, where some modes are dropped, which is reflected in poor NLL. 
    \textbf{Middle-Right}: Synthetic data samples generated by DP-MERF at $\epsilon=1$. 
    \textbf{Right}: Synthetic data samples generated by DP-HP at $\epsilon=1$. 
    Our method captures all modes accurately at $\epsilon=1$, and achieves better NLL thanks to a smaller size of feature map than that of DP-MERF (see text).
    }
    \label{fig:syn2d_samples}
    % \vspace{0.4cm}
\end{figure*}

\section{Related Work}

Approaches to differentially private data release can be broadly sorted into three categories. 
One line of prior work with background in learning theory aims to provide theoretical guarantees on the utility of released data \cite{snoke2018pmse, Mohammed_DPDR_for_data_mining, Xiao_DPDR_trough_partitioning, Hardt_simple_practical_DPDR, Zhu_DPDP_survey}.
This usually requires strong constraints on the type of data and the intended use of the released data.
% , which may not hold in practice. 
% As our approach aims for practical applicability, these methods significantly differ from ours.

A second line of work focuses on the sub-problem of discrete data with limited domain size, which is relevant to tabular datasets \cite{privbayes, priview, chen2015dpdatapublication, zhang2021privsyn}. Such approaches typically approximate the structure in the data by identifying small sub-sets of features with high correlation and releasing these lower order marginals in a private way.
%For instance, PrivBayes \cite{privbayes} privately builds a sparse Bayesian network of data-attributes, which connects highly correlated features and and is thus able to generate synthetic data based on lower order marginals. 
Some of these methods have also been successful in the recent NIST 2018 Differential Privacy Synthetic Data Challenge \cite{nist2018},
while these methods often require discretization of the data and do not scale to higher dimensionality in arbitrary domains. 
% where additional public data could be used to carefully select relevant marginals in advance.  

The third line of work aims for broad applicability without constraints on the type of data or the kind of downstream tasks to be used. Recent approaches 
% and is thus most related to a number of recent works, which 
attempt to leverage the modeling power of deep generative models in the private setting. While work on VAEs exists \cite{acs2018differentially}, GANs are  the most popular model \cite{DPGAN, DP_CGAN, DBLP:conf/sec/FrigerioOGD19, PATE_GAN, gs-wgan}, where most of these utilize a version of DP-SGD \cite{DP_SGD} to accomplish this training, 
% , on which we focus  in our comparison. A
% crucial advantage of GANs is that the generator model can be trained without direct access to training data and thus requires no privatization, as long as the discriminator is made private. 
% DP-GAN \cite{DPGAN}, DP-CGAN \cite{DP_CGAN} and GS-WGAN \cite{gs-wgan} all utilize a version of DP-SGD \cite{DP_SGD} to accomplish this training. 
% GS-WGAN improves on the other methods
% by using multiple discriminator networks trained on separate parts of the dataset to amplify privacy by subsampling and removes the need for gradient clipping through an adaptive loss function. 
 while PATE-GAN is based on the private aggregation of teacher ensembles (PATE) \cite{papernot:private-training}. 
% While DP-CGAN and GS-WGAN generate labeled data, DP-GAN and PATE-GAN do not generate labels. As a result the latter two models must be trained separately for each label subset of the data in order to generate a labeled dataset. 

The closest prior work to the proposed method is DP-MERF \cite{dpmerf}, where kernel mean embeddings are approximated with random Fourier features \cite{rahimi2008random} instead of Hermite polynomials. Random feature approximations of MMD have also been used with DP \cite{BalTolSch18, sarpatwar2019differentially}.
A recent work utilizes the Sinkhorn divergence for private data generation \cite{sinkhorn_21}, which more or less matches the results of DP-MERF when the regularizer is large and the cost function is the L2 distance. 
%
% The experimental results in \cite{sinkhorn_21} show that using the Sinkhorn divergence and DP-MERF perform similarly in the classification tasks. 
% %
To our knowledge, ours is the first work using Hermite polynomials to approximate MMD in the context of differentially private data generation.

% \mj{Also add the Sinkhorn divergence paper and describe the differences}

\section{Experiments}\label{sec:experiments}

Here, we show the performance of our method tested on several real world datasets. 
% In each case, we train DP-HP and other generative models under privacy constraints. 
Evaluating the quality of generated data itself is challenging. Popular metrics such as inception score and Fr\'echet inception distance are appropriate to use for evaluating color images.
For the generated samples for tabular data and black and white images, 
we use the following three metrics:
% \begin{enumerate}
    (a) Negative log-likelihood of generated samples given a ground truth model in \secref{2d_MoG};  
    (b) $\alpha$-way marginals of generated samples in \secref{discrete_data} to judge whether the generated samples contain a similar correlation structure to the real data;
    (c) Test accuracy on the real data given classifiers trained with generated samples in \secref{generalization_error} to judge the generalization performance from synthetic to real data. 
% \end{enumerate}

As comparison methods, we tested PrivBayes \cite{privbayes}, DP-CGAN \cite{DP_CGAN}, DP-GAN \cite{DPGAN} and DP-MERF \cite{dpmerf}. For image datasets we also trained GS-WGAN \cite{gs-wgan}. 
% While DP-CGAN uses the RDP accountant \cite{mironov2017renyi} to account for privacy loss during training, the remaining methods all use the analytical moments accountant \cite{wang2019subsampled}. 
% Using the auto-dp package\footnote{auto-dp is released here: \url{https://github.com/yuxiangw/autodp} (Apache License 2.0)}, this allows us to easily compute the corresponding privacy parameter $\sigma$ for the Gaussian mechanism. 
Our experiments were implemented in PyTorch \cite{pytorch} and run using Nvidia Kepler20 and Kepler80 GPUs. Our code is available at
\url{https://github.com/ParkLabML/DP-HP}.
% \url{https://anonymous.4open.science/r/dp-hp-E4DC/}.

\begin{table*}[htb]
\caption{$\alpha$-way marginals evaluated on generated samples with discretized Adult and Census datasets.}
\centering
\scalebox{0.8}{
% \begin{tabular}{lc*{2}{|ccc}||lc*{2}{|ccc}}
\begin{tabular}{lc|cc|cc|cc || lc|cc|cc|cc}
\toprule
\multicolumn{2}{ c| }{{\textit{Adult}}} & \multicolumn{2}{ c| }{PrivBayes} & \multicolumn{2}{ c| }{DP-MERF} & 
\multicolumn{2}{ c|| }{\textbf{DP-HP}} & 
\multicolumn{2}{ c| }{{\textit{Census}}} & \multicolumn{2}{ c| }{PrivBayes} & \multicolumn{2}{c|}{DP-MERF}&
\multicolumn{2}{c}{DP-HP} \\ 
& &  $\epsilon {=} 0.3$ & $\epsilon{=}0.1$ &  $\epsilon{=}0.3$ & $\epsilon{=}0.1$ & $\epsilon{=}0.3$ & $\epsilon{=}0.1$
& & & $\epsilon{=}0.3$ & $\epsilon{=}0.1$ &  $\epsilon{=}0.3$ & $\epsilon{=}0.1$ & $\epsilon{=}0.3$ & $\epsilon{=}0.1$
\\
\midrule
{} & $\alpha{=}3$ & 0.446 & 0.577 & 0.405 & 0.480 & \textbf{0.332} & \textbf{0.377}
& {} & $\alpha{=}2$ &  0.180 & 0.291 & 0.190 & 0.222 &  \textbf{0.141} & \textbf{0.155}
\\
& $\alpha{=}4$ & 0.547 & 0.673 & 0.508 & 0.590 & \textbf{0.418} & \textbf{0.467}
& & $\alpha{=}3$ & 0.323 & 0.429 & 0.302 & 0.337 & \textbf{0.211} & \textbf{0.232}
\\
\bottomrule
\label{tab:alpha_way}
\end{tabular}} 
\end{table*}

\subsection{2D Gaussian mixtures}\label{sec:2d_MoG}
We begin our experiments on Gaussian mixtures, as shown in \figref{syn2d_samples} (left). We generate 4000 samples from each Gaussian, reserving 10\% for the test set,
which yields 90000 training samples from the following distribution:
% \begin{align}
$p(\vx, \vy) = \prod_i^N \sum_{j \in C_{\vy_i}} \frac{1}{C} \mathcal{N}(\vx_i|\vmu_j, \sigma \mI_2)$
% \end{align}
where $N=90000$, and $\sigma=0.2$. $C=25$ is the number of clusters and $C_y$ denotes the set of indices for means $\vmu$ assigned to class $y$. Five Gaussians are assigned to each class, which leads to a uniform distribution over $\vy$ and $18000$ samples per class.
We use the negative log likelihood (NLL) 
of the samples under the true distribution as a score\footnote{Note that this is different from the other common measure of computing the negative log-likelihood of the true data given the learned model parameters.} to measure the quality of the generated samples: $\text{NLL}(\vx, \vy) =-\log p(\vx, \vy)$. The lower NLL the better. 
%
% \begin{align}
%     \text{NLL}(\vx, \vy) =- \sum_i^N \log \sum_{j \in C_{\vy_i}} \frac{1}{C} \mathcal{N}(\vx_i|\vmu_j, \sigma \mI_2)
% \end{align}
%
% A high NLL score indicates that many samples lie in low density regions of the data distribution. In cases where models tend to under-fit the data, a lower NLL score can thus be regarded as better. However, a low score does not imply that all modes are covered and may also be the result of low sample variance, although the out-of-distribution samples dominate the score, due to the non-linearity of the $\log$ function.

% At different levels of privacy, we train DP-CGAN on this dataset and select the models with the fewest dropped modes and secondarily the lowest NLL. 
We compare our method to DP-CGAN and DP-MERF at $(\epsilon, \delta)=(1, 10^{-5})$ in \figref{syn2d_samples}.
Many of the generated samples by DP-CGAN  fall out of the distribution and some modes are dropped (like the green one in the top right corner). DP-MERF  preserves all modes. DP-HP performs better than DP-MERF by placing fewer samples in low density regions as indicated by the low NLL. This is due to the drastic difference in the size of the feature map. DP-MERF used $30,000$ random features (i.e., $30,000$-dimensional feature map). DP-HP used the $25$-th order Hermite polynomials on both sum and product kernel approximation (i.e., $25^2+25 = 650$-dimensional feature map). in this example, as the input is 2-dimensional, it was not necessary to subsample the input dimensions to approximate the outer product kernel.

\subsection{$\alpha$-way marginals with discretized tabular data}\label{sec:discrete_data} 

We compare our method to PrivBayes \cite{privbayes} and DP-MERF. For PrivBayes, we used the  published code from \cite{mckenna2019graphical}, which builds on the original code with \cite{zhang2018ektelo} as a wrapper. 
We test the model on the discretized Adult and Census datasets. Although these datasets are typically used for classification, we use their inputs only for the task of learning the input distribution.
Following \cite{privbayes}, we measure $\alpha$-way marginals of generated samples at varying levels of $\epsilon$-DP with $\delta=10^{-5}$. 
We measure the accuracy of each marginal of the generated dataset by the total variation distance between itself and the real data marginal (i.e., half of the L1 distance between the two marginals, when both of them are treated as probability distributions). We use the average accuracy over all marginals as the final error metric for $\alpha$-way marginals.
In \tabref{alpha_way}, our method outperforms other two at the stringent privacy regime. See \suppsecref{hyperparams_discrete} for hyperparameter values we used, and  \suppsecref{gamma} for the impact of $\gamma$ on the quality of the generated samples. We also show how the selection of $D_{prod}$ affects the accuracy in  \suppsecref{tradeoff_subsamp_dims}.  

\subsection{Generalization from synthetic to real data}\label{sec:generalization_error}

Following \cite{gs-wgan, DP_CGAN, PATE_GAN,gs-wgan, dpmerf, sinkhorn_21}, 
we evaluate the quality of the (private and non-private) generated samples from these models using the common approach of measuring performance on downstream tasks.
% following \cite{gs-wgan, DP_CGAN, PATE_GAN, dpmerf}. 
We train 12 different commonly used classifier models using generated samples and then evaluate the classifiers on a test set containing \textit{real} data samples. 
Each setup is averaged over 5 random seeds. The test accuracy indicates how well the models generalize from the synthetic to the real data distribution and thus, the utility of using private data samples instead of the real ones. Details on the 12 models can be found in \tabref{image_downstream_hyperparam}. 
% All the hyperparameter experimental details are given in Appendix due to the space limit. 

\paragraph{Tabular data.} First, we explore the performance of DP-HP algorithm on eight  different imbalanced tabular datasets with both numerical and categorical input features. The numerical features on those tabular datasets can be either discrete (e.g. age in years) or continuous (e.g. height) and the categorical ones may be binary (e.g. drug vs placebo group) or multi-class (e.g. nationality). The datasets are described in detail in \suppsecref{app_results}. 
%
% In order to compare the distinct considered  models over the tabular datasets we can point out two cases: 
As an evaluation metric, we use ROC (area under the receiver characteristics curve) and PRC (area under the precision recall curve) for datasets with binary labels, and F1 score for dataset with multi-class labels.  
\tabref{summary_tabular} shows the average over the 12 classifiers trained on the generated samples (also averaged over 5 independent seeds), where  
% The table compares the averaged results for generated samples via DP-MERF and DP-HP for the non-private case and  DP-CGAN, DP-GAN, DP-MERF and DP-HP for the private case with a privacy constraints of $\epsilon=1$ and $\delta=10^{-5}$. 
% As shown in \tabref{summary_tabular}, 
overall DP-HP outperforms the other methods in both the private and non-private settings, followed by DP-MERF.\footnote{For the Cervical dataset, the non-privately generated samples by DP-MERF and DP-HP give better results than the baseline trained with real data. This may be due to the fact that the dataset is relatively small which can lead to overfitting. The generating samples by DP-MERF and DP-HP could bring a regularizing effect, which improves the performance as a result.} 
See \suppsecref{tabular_data_DP_HP_hyperparam} for hyperparameter values we used. We also show the non-private MERF and HP results in \tabref{non_priv_as_well_summary_tabular} in Appendix.
% As a baseline, we consider the classifiers trained on real data samples.

% \input{tables/tab_data_order_comparison_dp}

% In \tabref{tab_order_comparison}, 
% % we show the order of Hermite polynomials used in DP-HP and the number of random features used in DP-MERF in the experiments. 
% our method needs a significantly lower number of features compared to DP-MERF for a better accuracy. Strikingly, in the private settings, we often needed only an order of HP less than 10, which greatly reduced the effective sensitivity of the feature map. 

\begin{table*}[t!]
\caption{Performance comparison on Tabular datasets. The average over five independent runs. 
% The top six datasets contain binary labels while the bottom two datasets contain multi-class labels. 
}
% \vskip 0.1in
\centering
\scalebox{0.7}{
\begin{tabular}{l *{5}{|cc} }
\toprule
% & real & & DP-CGAN (np)  & MERF (np)  & DP-CGAN (p)  & MERF (p)  \\ \midrule
& \multicolumn{2}{ c| }{Real}  & \multicolumn{2}{ c| }{DP-CGAN}  &
\multicolumn{2}{ c| }{DP-GAN}  &
\multicolumn{2}{ c| }{{DP-MERF}} &
\multicolumn{2}{ c }{\textbf{DP-HP}}\\ 
&\multicolumn{2}{ c| }{} &   \multicolumn{2}{ c| }{($1,10^{-5}$)-DP} & \multicolumn{2}{ c| }{($1,10^{-5}$)-DP} & 
\multicolumn{2}{ c| }{($1,10^{-5}$)-DP} &
\multicolumn{2}{ c }{($1,10^{-5}$)-DP}\\ 
\midrule
\textbf{adult} & 0.786 & 0.683 &  0.509 & 0.444 & 0.511 & 0.445 & 0.642 & 0.524 & \textbf{0,688 }& \textbf{0,632} \\
\textbf{census} & 0.776 & 0.433   & 0.655 & 0.216 & 0.529 & 0.166 & 0.685 & 0.236 & \textbf{0,699} & \textbf{0,328 }\\
\textbf{cervical} & 0.959 & 0.858  & 0.519 & 0.200 & 0.485 & 0.183 & 0.531 & 0.176 & \textbf{0,616 }& \textbf{0,312 }\\
\textbf{credit} & 0.924 & 0.864 &  0.664 & 0.356 & 0.435 & 0.150 & 0.751 & 0.622 & \textbf{0,786} & \textbf{0,744} \\
\textbf{epileptic} & 0.808 & 0.636 &  0.578 & 0.241 & 0.505 & 0.196 & 0.605 & 0.316 & \textbf{0,609} &\textbf{ 0,554} \\
\textbf{isolet} & 0.895 & 0.741 & 0.511 & 0.198 & 0.540 & 0.205 & 0.557 & 0.228 & \textbf{0,572} & \textbf{0,498} \\ 

& \multicolumn{2}{ c| }{F1}
& \multicolumn{2}{ c| }{F1}
& \multicolumn{2}{ c| }{F1}
& \multicolumn{2}{ c| }{F1}
& \multicolumn{2}{ c }{F1}\\ 
\textbf{covtype} & \multicolumn{2}{ c| }{0.820} &
% \multicolumn{2}{ c| }{\textbf{0.601}} &
% \multicolumn{2}{ c |}{0.580} &
\multicolumn{2}{ c| }{0.285} &
\multicolumn{2}{ c| }{0.492} &
\multicolumn{2}{ c| }{0.467} &  \multicolumn{2}{ c }{\textbf{0.537}}\\
\textbf{intrusion} & \multicolumn{2}{ c| }{0.971} &  
% \multicolumn{2}{ c| }{0.884} &
% \multicolumn{2}{ c| }{\textbf{0.888}} &
\multicolumn{2}{ c| }{0.302} & 
\multicolumn{2}{ c| }{0.251} &
\multicolumn{2}{ c| }{\textbf{0.892}} & \multicolumn{2}{ c }{0.890} \\ 
\bottomrule
\end{tabular}}\label{tab:summary_tabular}
% \vspace{-15pt}
\end{table*}

\begin{figure*}
 \begin{subfigure}[b]{0.5\textwidth}
 \centering
\includegraphics[width=0.8\linewidth]{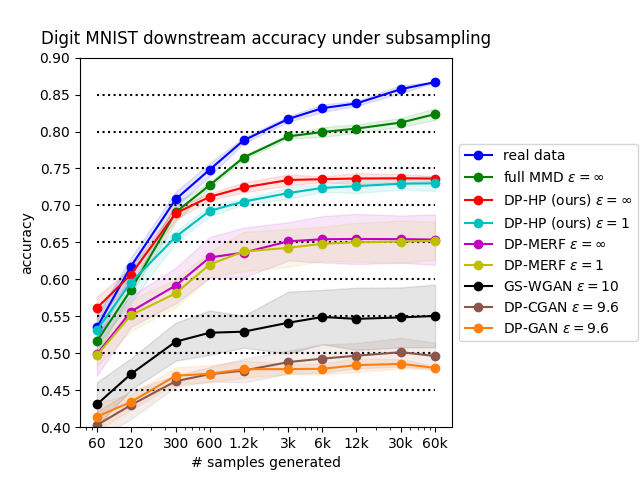}
 \caption{MNIST}
 \label{fig:mnist_subsamp_d}
\end{subfigure}
\begin{subfigure}[b]{0.5\textwidth}
 \centering
\includegraphics[width=0.8\linewidth]{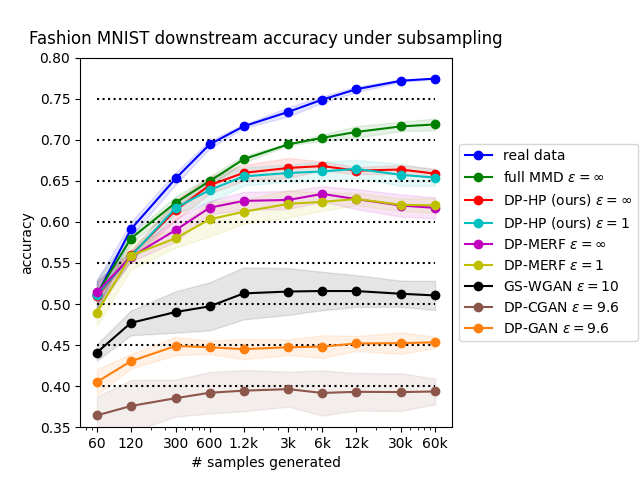}
 \caption{FashionMNIST}
 \label{fig:mnist_subsamp_f}
 \end{subfigure}
\caption{We compare the real data test accuracy as a function of training set size for models trained on synthetic data from DP-HP and comparison models. Confidence intervals show 1 standard deviation.}
\label{fig:mnist_subsamp}
% \vspace{-15pt}
\end{figure*}

\paragraph{Image data.} We follow previous work in testing our method on image datasets MNIST \cite{lecun2010mnist} (license: CC BY-SA 3.0) and FashionMNIST \cite{xiao2017fashion} (license: MIT). Both datasets contain 60000 images from 10 different balanced classes. We test both fully connected and convolutional generator networks and find that the former works better for MNIST, while the latter model achieves better scores on FashionMNIST. For the experimental setup of DP-HP on the image datasets see \tabref{image_hyperparam} in \suppsecref{hyperparams}. A qualitative sample of the generated images for DP-HP and comparison methods is shown in \figref{generated_samples}. 
% Comparing DP-HP plots for the two datasets highlights the smoothness bias induced by the CNN generator on FashionMNIST. 
While qualitatively GS-WGAN produces the cleanest samples, DP-HP outperforms GS-WGAN on downstream tasks. This can be explained by a lack of sample diversity in GS-WGAN shown in \figref{mnist_subsamp}.
% in  or by a mismatch between what features humans find important on visual inspection and the features that are actually relevant in a discrimination task.

In \figref{mnist_subsamp}, we compare the test accuracy on real image data based on private synthetic samples from DP-GAN, DP-CGAN, GS-WGAN, DP-MERF and DP-HP generators. As additional baselines we include performance of real data and of \textit{full MMD}, a non-private generator, which is trained with the MMD estimator in \eqref{MMD_full} in a mini-batch fashion.
DP-HP gives the best accuracy over the other considered methods followed by DP-MERF but with a considerable difference especially on the MNIST dataset. For GAN-based methods, we use the same weak privacy constraints given in the original papers, because they do not produce meaningful samples at $\epsilon=1$. Nonetheless, the accuracy these models achieve remains relatively low.
Results for individual models for both image datasets are given in \suppsecref{image_results}. 

\begin{figure}[t]
    %\vspace{-15pt}
    \centering
    \includegraphics[width=0.8\linewidth]{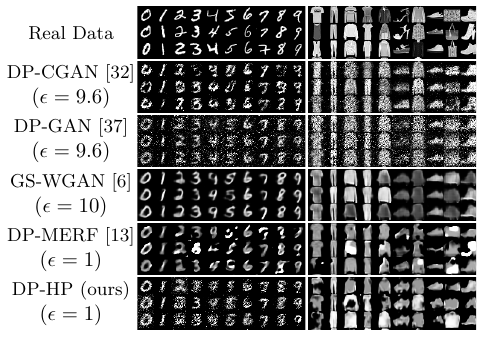}
    \caption{Generated MNIST and FashionMNIST samples from DP-HP and comparison models}
    \label{fig:generated_samples}
    % \vspace{-10pt}
\end{figure}
%For DP-CGAN and GS-WGAN methods we set the same privacy constraints used in the original papers due to the fact that at high privacy levels like $\epsilon \leq 1$ the generated samples aren't usable at all. The downstream accuracy by model for both image datasets that can be found in \suppsecref{image_results}.
%

Finally, we show the downstream accuracy for smaller generated datasets down to 60 samples (or 0.1\% of original dataset) in \figref{mnist_subsamp}. The points, at which additional generated data does not lead to improved performance, gives us a sense of the redundancy present in the generated data. We observe that all generative models except \textit{full MMD} see little increase in performance as we increase the number of synthetic data samples to train the classifiers. This indicates that the \textit{effective dataset size} these methods produce lies only at about 5\% (3k) to 10\% (6k) of the original data. For DP-GAN and DP-CGAN this effect is even more pronounced, showing little to no gain in accuracy after the first 300 to 600 samples respectively on FashionMNIST.

\section{Summary and Discussion}
We propose a DP data generation framework that improves the privacy-accuracy trade-off using the Hermite polynomials features thanks to the orderedness of the polynomial features. 
We chose the combination of  outer product and sum kernels computational tractability in handling high-dimensional data.
% unlike 
%
% We propose a simple and practical algorithm using the random feature representation of kernel mean embeddings for DP data generation. 
The quality of generated data by our method is significantly higher than that by other state-of-the-art methods, in terms of three different evaluation metrics. 
In all experiments, we observed that assigning  $\epsilon$ more to $\epsilon_1$ than $\epsilon_2$ and using the sum kernel's mean embedding as a main objective together with the outer product kernel's mean embedding as a constraint (weighted by $\gamma$) help improving the performance of DP-HP. 

As the size of mean embedding grows exponentially with the input dimension under the outer product kernel, we chose to subsample the input dimensions. However, even with the subsampling, we needed to be careful not to explode the size of the kernel's mean embedding, which limits the subsampling dimension to be less than 5, in practice.
This gives us a question whether there are better ways to approximate the outer product kernel than random sampling across all input dimensions. We leave this for future work. 

\section*{Acknowledgements}
 M.A. Charusaie and F. Harder thank the International Max Planck Research
School for Intelligent Systems (IMPRS-IS) for its support. M. Vinaroz thanks the support by the Gibs Schule Foundation and the Institutional Strategy of the University of T\"ubingen (ZUK63).  K. Adamczewski is grateful for
the support of the Max Planck ETH Center for Learning Systems.
M. Park thanks the CIFAR AI Chair fund (at AMII) for its support. M. Park also thanks Wittawat Jitkrittum and Danica J. Sutherland for their valuable time
discussing the project. 
We thank our anonymous reviewers for their constructive feedback.

\bibliography{ms}
\bibliographystyle{icml2022}

%%%%%%%%%%%%%%%%%%%%%%%%%%%%%%%%%%%%%%%%%%%%%%%%%%%%%%%%%%%%%%%%%%%%%%%%%%%%%%%
%%%%%%%%%%%%%%%%%%%%%%%%%%%%%%%%%%%%%%%%%%%%%%%%%%%%%%%%%%%%%%%%%%%%%%%%%%%%%%%
% APPENDIX
%%%%%%%%%%%%%%%%%%%%%%%%%%%%%%%%%%%%%%%%%%%%%%%%%%%%%%%%%%%%%%%%%%%%%%%%%%%%%%%
%%%%%%%%%%%%%%%%%%%%%%%%%%%%%%%%%%%%%%%%%%%%%%%%%%%%%%%%%%%%%%%%%%%%%%%%%%%%%%%
\newpage
\appendix
\onecolumn
%\section{You \emph{can} have an appendix here.}

%You can have as much text here as you want. The main body must be at most $8$ pages long.
%For the final version, one more page can be added.
%If you want, you can use an appendix like this one, even using the one-column format.

\begin{center}
    {\LARGE\textbf{Appendix}}
\end{center}

\section{Effect of length scale on the kernel approximation}\label{supp:compare_HP}

\figref{feature_comparison} shows the effect of the kernel length scale on the kernel approximation for both HPs and RFs.

\begin{figure}[h]
% To control the width of the figure, change the above width setting.
% \vspace{-5pt}
\centering
% \vspace{-13mm}
\includegraphics[width=0.8\linewidth]{}
\caption{Comparison between HP and random features at a different length scale value. Different color indicates a different datapoint, where four datapoints are drawn from $\Nrm(0,1)$.
\textbf{Left}: With length scale $l=0.71$ (relatively small compared to $1$), random features (top) at the four datapoints exhibit large variability while the Hermite polynomial features  (bottom) at those datapoints decay at around order $\leq 20$. 
\textbf{Right}: With $l=7.07$ (large compared to $1$), random features (top) exhibit less variability, while it is not clear how many features are necessary to consider. On the other hand, the Hermite polynomial features (bottom) decay fast at around order $\leq 5$ and we can make a cut-off at that order without losing much information.  
  }
  \label{fig:feature_comparison}
%   \vspace{-15pt}
\end{figure}

% \textcolor{red}{Margarita: Supplementary or Appendix?}
\section{Approximation error under HP and Random Fourier features }\label{supp:prf_conv}

In the following proposition, we provide that provably our method converges with $O(\rho^{2C})$ where $\rho <1$ is the constant in the Mehler's formula, while DP-MERF has the convergence $\Omega(1/C)$, where $C$ is the number of features in each case.
\begin{prop}
Let $X$ and $Y$ be standard normal random variables. There exists a $C$-dimensional Hermite feature map $\phi_{HP}^{(C)}(\cdot)$ with the expected predictive error bounded as
\begin{equation}
    \mathbb{E}_{X, Y}\big[\big|k(X, Y)-\langle \hat{\phi}_{HP}^{(C)}(X), \hat{\phi}_{HP}^{(C)}(Y)\rangle \big|\big]\leq \frac{1}{3\sqrt{2}}(\frac{1}{3})^C. \label{eq:first_err_bound}
\end{equation} However, the expected predictive error of the random feature map $\hat{\phi}_{RF, \omega}(\cdot)$ with $C$ number of features (i.e., $\omega$ is a vector of length $C$) and the same approximating kernel is equal to
\begin{equation}
    \mathbb{E}_{\boldsymbol{\omega}, X, Y}\big[\big|k(X, Y)-\langle \hat{\phi}_{RF, \omega}(X), \hat{\phi}_{RF, \omega}(Y)\rangle\big|\big]\geq\frac{1}{8C}.\label{eq:sec_err_bound}
\end{equation}
\end{prop}
\begin{proof}
% See Appendix \ref{app:prf_conv}.

We start by proving \eqref{first_err_bound}. In this case, we write the squared error term as following:
\begin{align}
    A_{x, y} = \big|k(x, y)-\langle \hat{\phi}_{HP}^{(C)}(x), \hat{\phi}_{HP}^{(C)}(y)\rangle \big|^2
    &\overset{(a)}{=} \big|\sum_{C+1}^{\infty} \frac{\lambda_l}{\sqrt{N_l}}H_{l}(x)e^{-\frac{\rho}{1+\rho}x^2}\frac{1}{\sqrt{N_l}}H_{l}(y)e^{-\frac{\rho}{1+\rho}y^2}\big|^2\\
    &= \sum_{l, l'=C+1}^{\infty} \frac{\lambda_{l}\lambda_{l'}}{N_{l}N_{l'}} H_{l}(x)H_{l'}(x)H_{l}(y)H_{l'}(y)e^{-\frac{2\rho}{1+\rho}x^2-\frac{2\rho}{1+\rho}y^2},
\end{align}
where $(a)$ is followed by the definition of $\hat{\phi}_{HP}^{(C)}$ in \eqref{HP} and its approximation property (i.e., Mehler's formula \eqref{mehler}). Now, by setting $\rho=\frac{1}{3}$, we have
\begin{equation}
    A_{x, y} = \sum_{l, l'=C+1}^{\infty} \frac{\lambda_l \lambda_{l'}}{N_{l} N_{l'}} H_{l}(x)H_{l'}(x)H_{l}(y)H_{l'}(y)e^{-\frac{1}{2}x^2-\frac{1}{2}y^2}.
\end{equation}
Next, we average out $A_{x, y}$ for $x$s and $y$s that are drawn from a standard normal distribution as
\begin{align}
    \mathbb{E}_{X, Y\sim N(0, 1)} \big[A_{X, Y}\big] &= \int_{x, y = -\infty}^{\infty} \sum_{l, l'=C+1}^{\infty} \frac{\lambda_{l}\lambda_{l'}}{N_{l}N_{l'}} H_{l}(x)H_{l'}(x)H_{l}(y)H_{l'}(y)e^{-\frac{1}{2}x^2-\frac{1}{2}y^2} \frac{e^{-\frac{1}{2}x^2-\frac{1}{2}y^2}}{2\pi}{\rm d}x {\rm d} y\\
    &=\sum_{l, l'=C+1}^{\infty} \frac{\lambda_{l}\lambda_{l'}}{N_{l}N_{l'}} \frac{\int H_{l}(x)H_{l'}(x)e^{-x^2}{\rm d} x \int H_{l}(y)H_{l'}(y)e^{-y^2}{\rm d} y}{2\pi}\\
    &\overset{(a)}{=}\sum_{l=C+1}^{\infty} \frac{\lambda_l^2}{N_l^2}
    \frac{1}{2\pi}
    \sqrt{\pi}2^l l! \sqrt{\pi}2^{l}l!
    \overset{(b)}{=} \sum_{l=C+1}^{\infty} \frac{(2/3)^2 (1/3)^{2l}}{\frac{1}{2} 2^{2l} (l!)^2} \frac{2^{2l} (l!)^2}{2}=\frac{4}{9} \sum_{l=C+1}^{\infty} (1/3)^{2l} \\& \overset{(c)}{=}\frac{1}{2}(1/3)^{2C+2}, \label{eq:exp_conv}
\end{align}
where $(a)$ is followed by orthogonality of Hermite polynomials, $(b)$ is followed by the definition of $\lambda_{l}$ and $N_l$ in Section \ref{sec:HP}, and $(c)$ is due to the infinite Geometric series. 

As a result of \eqref{exp_conv}, the definition of $A_{x, y}$, and Jensen's inequality we have
\begin{equation}
    \mathbb{E}_{X, Y}\big[|k(X, Y)-\langle \hat{\phi}_{HP}^{(C)}(X), \hat{\phi}_{HP}^{(C)}(X)\rangle| \big]\leq \mathbb{E}^{1/2}_{X, Y}\big[A_{X, Y}\big] \leq \frac{1}{3\sqrt{2}}\big(\frac{1}{3}\big)^{C}.
\end{equation}
For bounding the expected error of random features, we expand the squared error using the definition given in \eqref{RF}:
\begin{align}
    B_{x, y, \boldsymbol{\omega}} &= \big|k(x, y)-\langle \phi_{RF, \boldsymbol{\omega}}(x), \phi_{RF, \boldsymbol{\omega}}(y)\rangle\big|^2 = \big|e^{-\frac{\rho (x-y)^2}{1-\rho^2}}-\frac{2}{C}\sum_{i=1}^{C/2}\cos \omega_i x\cos \omega_i y-\frac{2}{C}\sum_{i=1}^{C/2}\sin \omega_i x\sin \omega_i y\big|\\
    &= \underbrace{\big|e^{-\frac{\rho (x-y)^2}{1-\rho^2}}-\frac{2}{C}\sum_{i=1}^{C/2}\cos \omega_i(x-y)\big|^2}_{B_{x, y, \boldsymbol{\omega}}}.
\end{align}
Next, by setting $\rho=\frac{1}{3}$, we have 
\begin{equation}
    B_{x, y, \boldsymbol{\omega}} =e^{-\frac{3}{4}(x-y)^2}-\frac{4}{C} e^{-\frac{3}{8}(x-y)^2}\underbrace{\sum_{i=1}^{C/2} \cos \omega_i(x-y)}_{E_{1, x, y, \boldsymbol{\omega}}}+\frac{4}{C^2}\underbrace{\big(\sum_{i=1}^{C/2}\cos \omega_i (x-y)\big)^2}_{E_2, x, y, \boldsymbol{\omega}}. \label{eq:bdef}
\end{equation}
Next, we calculate the average of terms $E_{1, x, y, \boldsymbol{\omega}}$ and $E_{2, x, y, \boldsymbol{\omega}}$ over $\boldsymbol{\omega}$. 

% Since sine function is an odd function, we have
Due to the Bochner's theorem (see Theorem 3.7 of \cite{sparseprocess}) that shows a shift-invariant positive kernel could be written in the form of Fourier transform of a density function, we have
\begin{align}
    \mathbb{E}_{\boldsymbol{\omega}}\big[E_{1, x, y, \boldsymbol{\omega}}\big] &= \mathbb{E}_{\boldsymbol{\omega}}\big[\sum_{i=1}^{C/2} \cos \omega_i (x-y)\big]\\
    % & = \mathbb{E}_{\boldsymbol{\omega}}\big[\sum_{i=1}^{C/2} \cos \omega_i (x-y)\big]+j\mathbb{E}_{\boldsymbol{\omega}}\big[\sum_{i=1}^{C/2} \sin \omega_i (x-y)\big]\\
    % &=\mathbb{E}_{\boldsymbol{\omega}}\big[\sum_{i=1}^{C/2}e^{j\omega_i(x-y)}\big]\\
    &=\sum_{i=1}^{C/2}\mathbb{E}_{{\omega_i}}\big[e^{j\omega_i (x-y)}\big] = \frac{C}{2}e^{-\frac{3}{8}(x-y)^2}, \label{eq:e1}
\end{align}
% where the last equality is followed by the choice of distribution of $\omega_i$s in \cite{dpmerf} and 

Next, we obtain the average of $E_{2, x, y, \boldsymbol{\omega}}$ as following:
\begin{align}
    \mathbb{E}_{\boldsymbol{\omega}}\big[E_{2, x, y, \boldsymbol{\omega}}\big] &= \mathbb{E}_{\boldsymbol{\omega}}\big[\sum_{i, k=1}^{C/2}\cos \omega_i(x-y)\cos \omega_{k}(x-y)\big]\\
    &=\mathbb{E}_{\boldsymbol{\omega}}\big[\sum_{i, k=1}^{C/2}\frac{e^{j(\omega_i+\omega_k)(x-y)}+e^{j(\omega_i-\omega_k)(x-y)}+e^{j(-\omega_i+\omega_k)(x-y)}+e^{j(-\omega_i-\omega_k)(x-y)}}{4}\big]\\
    &\overset{(a)}{=}\sum_{i, k=1, i\neq k}^{C/2} \mathbb{E}_{\omega_i}\big[e^{j\omega_i(x-y)}\big]\mathbb{E}_{\omega_k}\big[e^{j\omega_k(x-y)}\big]+\frac{1}{2}\sum_{i=1}^{C/2}\big(\mathbb{E}_{\omega_i}\big[e^{j\omega_i(2x-2y)}\big]+1\big)\\
    &\overset{(b)}{=} \big(\frac{C^2}{4}-\frac{C}{2}\big)e^{-\frac{3}{4}(x-y)^2}+\frac{C}{4}\big(e^{-\frac{3}{4}(x-y)^2}+1\big)\\
    &=\frac{C^2}{4}e^{-\frac{3}{4}(x-y)^2}+\frac{C}{4}\big(e^{-\frac{3}{2}(x-y)^2}-2e^{-\frac{3}{4}(x-y)^2}+1\big), \label{eq:e2}
\end{align}
where $(a)$ is due to symmetry of the normal distribution of $\boldsymbol{\omega}$, and $(b)$ is followed by independence of ${\omega}_i$ and $\omega_k$ and their distribution symmetry.

Substituting \eqref{e1} and \eqref{e2} in \eqref{bdef}, and using Jensen's inequality, we have
\begin{align}
    \mathbb{E}_{X, Y\sim N(0, 1)}\mathbb{E}_{\boldsymbol{\omega}}\big[B_{x, y, \boldsymbol{\omega}}\big] &=\frac{1}{C} \mathbb{E}_{X, Y\sim N(0, 1)}\Big[\big(e^{-\frac{3}{4}(X-Y)^2}-1\big)^2\Big]\geq \frac{1}{C}\mathbb{E}^2_{X, Y}\Big[e^{-\frac{3}{4}(x-y)^2}-1\Big] \\&=\frac{1}{C}\big(\underbrace{\mathbb{E}_{X, Y\sim N(0, 1)}\big[e^{-\frac{3}{4}(X-Y)^2}\big]}_{G}-1\big)^2. \label{eq:jens}
\end{align}
To calculate $G$, we have
\begin{align}
    G &= \mathbb{E}_{X, Y\sim N(0, 1)}\big[e^{-\frac{3}{4}(X-Y)^2}\big]=\int_{x, y} \frac{e^{-\frac{3}{4}(x^2+y^2-2xy)}e^{-\frac{x^2}{2}-\frac{y^2}{2}}}{2\pi}{\rm d} x{\rm d}y\\
    &=\int_{x, y} \frac{e^{-\frac{5}{4}(x^2+y^2)+\frac{3}{2}xy}}{2\pi}{\rm d}x{\rm d}y\\
    &=\int_{x, y}\frac{e^{-\frac{5}{4}(x^2-\frac{6}{5}xy+\frac{9}{25}y^2)+\frac{9}{25}\frac{5}{4}y^2-\frac{5}{4}y^2}}{2\pi} {\rm d}x {\rm d} y\\
    &\overset{(a)}{=}\int_{y}\frac{e^{-\frac{4}{5}y^2}}{\sqrt{2\pi \frac{5}{2}}}\int_x \frac{e^{-\frac{5}{4}(x-\frac{3}{5}y)^2}}{\sqrt{2\pi \frac{2}{5}}}{\rm d} x{\rm d} y\\
    &=\int_{y}\frac{e^{-\frac{4}{5}y^2}}{\sqrt{2\pi\frac{5}{2}}}{\rm d} y = \frac{1}{2}\int_{y}\frac{e^{-\frac{4}{5}y^2}}{\sqrt{2\pi \frac{5}{8}}}\\
    &\overset{(b)}{=} \frac{1}{2}, \label{eq:gdef}
\end{align}
where $(a)$ and $(b)$ hold since for a normal distribution $f_{a, b}(x)=\frac{e^{-\frac{(x-b)^2}{2a}}}{\sqrt{2\pi a}}$, we have $\int_{x} f_{a, b}(x){\rm d} x = 1$. As a result of \eqref{jens} and \eqref{gdef} we have
\begin{equation}
    \mathbb{E}_{X, Y, \boldsymbol{\omega}}\big[B_{X, Y, \boldsymbol{\omega}}\big]\geq \frac{1}{4C}.
\end{equation}
Finally, since $0\leq B_{x, y, \boldsymbol{\omega}}\leq 4$, we have
\begin{align}
    \frac{1}{16 C}\leq \mathbb{E}_{X, Y, \boldsymbol{\omega}}\big[\frac{B_{X, Y, \boldsymbol{\omega}}}{4}\big]\leq \mathbb{E}_{X, Y, \boldsymbol{\omega}}\big[\frac{|B_{X, Y, \boldsymbol{\omega}}|^{1/2}}{2}\big]=\frac{1}{2}\mathbb{E}_{X, Y, \boldsymbol{\omega}}\big[\big|k(X, Y)-\langle \phi_{RF, \boldsymbol{\omega}}(X),\phi_{RF, \boldsymbol{\omega}}(Y)\rangle \big|\big],
\end{align}
which proves \eqref{sec_err_bound}.
\end{proof}

\section{Mercer's theorem and the generalized Hermite polynomials}\label{supp:generalized}

We first review Mercer's theorem, which is a fundamental theorem on how can we find the approximation of a kernel via finite-dimensional feature maps.

\begin{thm}[\cite{smola1998learning} Theorem 2.10 and Proposition 2.11 ]  \label{thm:marcer}
Suppose $k\in L_{\infty}(\mathcal{X}^2)$, is a symmetric real-valued function, for a non-empty set $\mathcal{X}$, such that the integral operator $T_kf(x)=\int_{\mathcal{X}} k(x, x') f(x') \partial \mu(x')$ is positive definite. Let $\psi_j\in L_2(\mathcal{X})$ be the normalized orthogonal eigenfunctions of $T_k$ associated with the eigenvalues $\lambda_j>0$, sorted in non-increasing order, then 
\begin{enumerate}
    \item $(\lambda_j)_j \in \ell_1$,
    \item $k(x, x')=\sum_{j=1}^{N_{\mathcal{H}}} \lambda_j \psi_j(x) \psi_j (x')$ holds for almost all $(x, x')$. Either $N_{\mathcal{H}}\in \mathbb{N} $, or $N_{\mathcal{H}} =\infty$; in the latter case, the series converge absolutely and uniformly for almost all $(x, x')$.
\end{enumerate}

Furthermore, for every $\epsilon>0$, there exists $n$ such that

\begin{equation}
|k(x, x')- \sum_{j=1}^{n} \lambda_j \psi_j(x) \psi_j(x')|<\epsilon,   
\end{equation}

for almost all $x, x'\in \mathcal{X}$. 
\end{thm}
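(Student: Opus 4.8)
The plan is to obtain the decomposition from the spectral theory of the integral operator $T_k$ and then upgrade the resulting $L_2$-convergence to the absolute, uniform convergence claimed. First I would observe that since $k \in L_\infty(\mathcal{X}^2)$ on a finite measure space, $T_k$ is Hilbert--Schmidt on $L_2(\mathcal{X}, \mu)$, hence compact; the symmetry of $k$ makes $T_k$ self-adjoint, and the positive-definiteness hypothesis makes it a positive operator. The spectral theorem for compact self-adjoint operators then supplies an orthonormal system of eigenfunctions $\psi_j$ with eigenvalues $\lambda_j > 0$ sorted in non-increasing order, together with the $L_2$-convergent series $k(x,x') = \sum_j \lambda_j \psi_j(x)\psi_j(x')$.

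The crux is passing from $\ell_2$-summability, which is automatic for a Hilbert--Schmidt operator, to the $\ell_1$-summability asserted in claim (1). For this I would introduce the truncated remainder kernel $k_n(x,x') = k(x,x') - \sum_{j=1}^n \lambda_j \psi_j(x)\psi_j(x')$ and verify that its integral operator is still positive, so that $k_n$ is itself positive-definite. The key lemma is that a positive-definite kernel has non-negative diagonal, which gives the pointwise bound $\sum_{j=1}^n \lambda_j \psi_j(x)^2 \le k(x,x)$. Integrating against $\mu$ and using $\int_\mathcal{X} \psi_j^2\, d\mu = 1$ yields $\sum_{j=1}^n \lambda_j \le \int_\mathcal{X} k(x,x)\, d\mu(x) = \Tr(T_k) < \infty$, which is exactly claim (1).

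To establish the absolute, uniform convergence in claim (2), I would note that the diagonal partial sums $s_n(x) = \sum_{j=1}^n \lambda_j \psi_j(x)^2$ form a monotone increasing sequence of continuous functions bounded above by $k(x,x)$, so Dini's theorem promotes their pointwise convergence to uniform convergence. A Cauchy--Schwarz estimate of the form $|\sum_{j=m}^n \lambda_j \psi_j(x)\psi_j(x')| \le (s_n(x)-s_{m-1}(x))^{1/2}(s_n(x')-s_{m-1}(x'))^{1/2}$ then transfers this uniform control of the diagonal to the off-diagonal series, showing that $\sum_j \lambda_j \psi_j(x)\psi_j(x')$ is uniformly Cauchy and hence converges absolutely and uniformly to some function $g(x,x')$. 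Finally I would identify $g$ with $k$ by observing that both functions induce the same integral operator $T_k$, since they share all eigenpairs, so $g = k$ for almost all $(x,x')$; the sup-norm approximation statement is then simply the tail estimate of this uniformly convergent series.

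The main obstacle is the diagonal-dominance lemma combined with the application of Dini's theorem: one must carefully justify that each remainder kernel $k_n$ remains positive-definite and that its diagonal sum converges to a continuous limit, and it is precisely here that the $L_\infty$ hypothesis (together with the implicit regularity of $\mathcal{X}$ and $\mu$) is used to control the ``almost everywhere'' exceptional sets and prevent pathological behaviour at the boundary between pointwise and uniform convergence.
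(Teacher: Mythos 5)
The first thing to note is that the paper does not prove this theorem at all: it is imported verbatim from Smola et al.\ (their Theorem 2.10 and Proposition 2.11), which in turn rests on K\"onig's generalization of Mercer's theorem to $L_\infty$ kernels, so there is no in-paper argument to compare yours against. Judged on its own terms, your sketch is the standard textbook proof of the \emph{classical} Mercer theorem, and under the classical hypotheses ($\mathcal{X}$ compact metric, $\mu$ a finite Borel measure with full support, $k$ \emph{continuous}) it is essentially the right argument: compact self-adjoint positive operator $\Rightarrow$ eigendecomposition; positivity of the remainder kernel $k_n$ $\Rightarrow$ diagonal dominance; integration $\Rightarrow$ $\ell_1$-summability; Dini plus Cauchy--Schwarz $\Rightarrow$ absolute and uniform convergence. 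One internal ordering issue even in that setting: Dini's theorem requires the pointwise limit of the diagonal partial sums $s_n$ to be \emph{continuous}, which you have not established at the point where you invoke it; in the standard proof one first shows (via Bessel and Cauchy--Schwarz) that the series converges to some kernel $g$ inducing the same operator, identifies $g=k$, and only then applies Dini to the diagonal.

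The genuine gap is that the statement you were asked to prove is not the classical theorem. Here $\mathcal{X}$ is only a non-empty set carrying a measure and $k$ is only assumed to lie in $L_\infty(\mathcal{X}^2)$; correspondingly, every conclusion is ``for almost all $(x,x')$''. In this setting two of your key steps are not merely delicate but unavailable. First, there is no topology, so continuity of eigenfunctions, compactness, and Dini's theorem are meaningless; uniform-off-a-null-set convergence must instead be extracted from essential-supremum bounds such as $\|\psi_j\|_\infty \le \lambda_j^{-1}\|k\|_\infty\,\mu(\mathcal{X})^{1/2}$ (from $\lambda_j\psi_j = T_k\psi_j$) combined with a.e.\ control of the diagonal tails. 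Second, the diagonal $\{(x,x)\}$ is a $\mu\otimes\mu$-null set, so for an $L_\infty$ kernel the quantities $k_n(x,x)$ and $\int_{\mathcal{X}} k(x,x)\,d\mu(x)$ are not well defined --- changing $k$ on a null set changes them arbitrarily --- and hence your central lemma (``positive-definite kernel has non-negative diagonal'') and the bound $\sum_{j\le n}\lambda_j \le \int k(x,x)\,d\mu = \mathrm{Tr}(T_k)$ are ill-posed as stated. The standard repair is operator-theoretic rather than pointwise: for instance, prove trace-class-ness directly by bounding, for the projection $P$ onto the span of indicators of a finite measurable partition $\{A_i\}$,
\begin{equation}
\mathrm{tr}(P\,T_k\,P) \;=\; \sum_i \frac{1}{\mu(A_i)}\int_{A_i}\!\int_{A_i} k\;d\mu\,d\mu \;\le\; \|k\|_{\infty}\,\mu(\mathcal{X}),
\end{equation}
and let the partitions refine (positivity of $T_k$ makes the trace basis-independent and the limit monotone); a.e.\ versions of the diagonal bound are then recovered through conditional-expectation (martingale-convergence) arguments applied to $k - \sum_{j\le n}\lambda_j\psi_j\otimes\psi_j$, with all exceptional sets collected into a single null set. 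Your closing paragraph gestures at these issues but does not resolve them; as written, the proposal proves a different, stronger-hypothesis theorem than the one stated.
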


This theorem states that one can define a feature map 
\begin{equation}
\Phi_n (x ) = \big[\sqrt{\lambda_1} \psi_1(x), \ldots, \sqrt{\lambda_n} \psi_n(x)\big]^T \label{eq:appr_k}
\end{equation}

such that the Euclidean inner product  $\langle \Phi (x), \Phi(x')\rangle$ approximates $k(x, x')$ up to an arbitrarily small factor $\epsilon$. 

By means of uniform convergence in Mercer's theorem, we can prove the convergence of the approximated MMD using the following lemma.

\begin{lem} \label{lem:unif_mmd}
Let $\mathcal{H}$ be an RKHS that is generated by the kernel $k(\cdot, \cdot)$, and let $\widehat{\mathcal{H}}_n$ be an RKHS with a kernel $k_n(\vx, \vy)$ that can uniformly approximate $k(\vx, \vy)$. Then, for a positive real value $\epsilon$, there exists $n$, such that for every pair of distributions $P, Q$, we have
\begin{equation}
\big|{\rm MMD}^2_{\mathcal{H}}(P, Q)-{\rm MMD}^2_{\widehat{\mathcal{H}}_n}(P, Q)\big|< \epsilon.
\end{equation}

\end{lem}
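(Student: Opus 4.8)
The plan is to reduce everything to the pointwise discrepancy $k-k_n$. Using the definition from Section~\ref{sec:mmd}, $\mathrm{MMD}^2(P,Q)=\mathbb{E}_{x,x'\sim P}k(x,x')+\mathbb{E}_{y,y'\sim Q}k(y,y')-2\mathbb{E}_{x\sim P}\mathbb{E}_{y\sim Q}k(x,y)$, I would write this expansion for both $k$ (over $\mathcal{H}$) and $k_n$ (over $\widehat{\mathcal{H}}_n$) and subtract. Since the three structural terms match up, the difference collapses into a sum of three expectations of the single quantity $k-k_n$:
\begin{align}
\mathrm{MMD}^2_{\mathcal{H}}(P,Q) - \mathrm{MMD}^2_{\widehat{\mathcal{H}}_n}(P,Q)
&= \mathbb{E}_{x,x'\sim P}\big[k(x,x')-k_n(x,x')\big] \nonumber \\
&\quad + \mathbb{E}_{y,y'\sim Q}\big[k(y,y')-k_n(y,y')\big] \nonumber \\
&\quad - 2\,\mathbb{E}_{x\sim P}\mathbb{E}_{y\sim Q}\big[k(x,y)-k_n(x,y)\big]. \nonumber
\end{align}

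Next, I would take absolute values and apply $|\mathbb{E}[Z]|\le\mathbb{E}[|Z|]\le\sup_{x,y}|k(x,y)-k_n(x,y)|$ to each of the three expectations. Interpreting the uniform-approximation hypothesis in the supremum-norm sense, for any $\epsilon'>0$ there is an index $n$ with $\sup_{x,y}|k(x,y)-k_n(x,y)|<\epsilon'$, so the three terms are bounded by $\epsilon'$, $\epsilon'$, and $2\epsilon'$, giving
\[
\big|\mathrm{MMD}^2_{\mathcal{H}}(P,Q) - \mathrm{MMD}^2_{\widehat{\mathcal{H}}_n}(P,Q)\big| \le 4\epsilon'.
\]
Setting $\epsilon'=\epsilon/4$ and invoking Theorem~\ref{thm:marcer} to produce the corresponding $n$ finishes the estimate. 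The essential point, which is exactly the uniformity asserted in the statement, is that $\sup_{x,y}|k-k_n|$ depends on the kernels alone and not on $P$ or $Q$; hence the \emph{same} $n$ works simultaneously for every pair of distributions, and no distribution-dependent quantity (such as a moment of $P$ or $Q$) enters the bound.

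The main obstacle I anticipate is the mismatch between the ``for almost all $(x,x')$'' convergence that Mercer's theorem actually delivers and the honest supremum-norm control the integration step above requires. A set that is null with respect to the base measure $\mu$ of Theorem~\ref{thm:marcer} could carry positive mass under the product measures $P\times P$, $Q\times Q$, or $P\times Q$, in which case pulling the $\sup$ out of the integrals is not literally justified. To close this gap cleanly I would state the uniform-approximation hypothesis directly in sup-norm form, which is legitimate here because the Hermite/Mehler expansion of the Gaussian kernel converges genuinely uniformly on the relevant domain; alternatively one could restrict attention to distributions absolutely continuous with respect to $\mu$, so that the $\mu$-null exceptional set is also null under $P$ and $Q$ and the three bounds go through unchanged.
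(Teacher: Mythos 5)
Your proposal is correct and follows essentially the same route as the paper's proof: expand both $\mathrm{MMD}^2$ expressions via the kernel-expectation form, collapse the difference into three expectations of $k-k_n$, bound each by the triangle inequality together with Jensen/Tonelli, and invoke the uniform approximation from Theorem~\ref{thm:marcer} with tolerance $\epsilon/4$ so that the same $n$ works for all $P,Q$. Your closing remark about the mismatch between Mercer's ``almost all $(x,x')$'' guarantee and the sup-norm control needed when $P$ or $Q$ may charge the $\mu$-null exceptional set is a genuine refinement that the paper's own proof silently glosses over, and your proposed fixes (sup-norm hypothesis, or absolute continuity with respect to $\mu$) are the right ones.
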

\begin{proof}
Firstly, using Theorem \ref{thm:marcer}, we can find $n$ such that $\big|k(x, y)-\langle \Phi_n(x), \Phi_n(y)\rangle\big|<\frac{\epsilon}{4}$.  We define the RKHS $\widehat{H}_n$ as the space of functions spanned by $\Phi_n(\cdot)$. Next, we rewrite ${{\rm MMD}^2_{\mathcal{H}}(P, Q)-{\rm MMD}^2_{\widehat{\mathcal{H}}_n}(P, Q)}$, using the definition of MMD in Section \ref{sec:mmd}, as 

\begin{align}
&\mathrm{MMD}^2_{\mathcal{H}}(P,Q) - \mathrm{MMD}^2_{\widehat{\mathcal{H}}_n}(P, Q) \nonumber \\
& = \mathbb{E}_{x, x'\sim P} \big[k(x, x')\big] +\mathbb{E}_{y, y'\sim Q} \big[k(y, y')\big]-2 \mathbb{E}_{x\sim P, y\sim Q} \big[k(x, y)\big] \nonumber \\
&- \mathbb{E}_{x, x'\sim P} \big[\langle \Phi_n(x), \Phi_n(x')\rangle\big] +\mathbb{E}_{y, y'\sim Q} \big[\langle \Phi_n(y), \Phi_n(y')\rangle\big]-2 \mathbb{E}_{x\sim P, y\sim Q} \big[\langle \Phi_n(x), \Phi_n(y)\rangle\big] 
\end{align}

Therefore, we can bound $\big|{\rm MMD}^2_{\mathcal{H}}(P, Q)-{\rm MMD}^2_{\widehat{\mathcal{H}}_n}(P, Q)\big|$ as 

\begin{equation}
\big|{\rm MMD}^2_{\mathcal{H}}(P, Q)-{\rm MMD}^2_{\widehat{\mathcal{H}}_n}(P, Q)\big| \overset{(a)}{\leq}  \bigg|\mathbb{E}_{x, x'\sim P}\big[k(x, x')\big]-\mathbb{E}_{x, x'\sim P}\Big[\big\langle \Phi_n(x), \Phi_n(x')\big\rangle\Big]\bigg|  \nonumber
\end{equation}

\begin{equation}
+ \bigg|\mathbb{E}_{y, y'\sim Q}\big[k(y, y')\big]-\mathbb{E}_{y, y'\sim P}\Big[\big\langle \Phi_n(y), \Phi_n(y')\big\rangle\Big]\bigg|
+ 2\bigg|\mathbb{E}_{x, y\sim P, Q}\big[k(x, y)\big]-\mathbb{E}_{x, y\sim P, Q}\Big[\big\langle \Phi_n(x), \Phi_n(y)\big\rangle\Big]\bigg|  \nonumber 
\end{equation}

\begin{align}
&\overset{(b)}{\leq}  \mathbb{E}_{x, x'\sim P}\bigg[\Big|k(x, x')-\big\langle \Phi_n(x), \Phi_n(x')\big\rangle\Big|\bigg] + \mathbb{E}_{y, y'\sim Q}\bigg[\Big|k(y, y')-\big\langle \Phi_n(y), \Phi_n(y')\rangle\Big|\bigg] \nonumber\\&\phantom{\overset{(b)}{\leq}  \mathbb{E}_{x, x'\sim P}\bigg[\Big|k(x, x')-\big\langle \Phi_n(x), \Phi_n(x')\big\rangle\Big|\bigg] +}+2\mathbb{E}_{x, y\sim P, Q}\bigg[\Big|k(x, y)-\big\langle \Phi_n(x), \Phi_n(y)\big\rangle\Big|\bigg] \nonumber \\
&\overset{(c)}{\leq} \mathbb{E}_{x, x'\sim P}\big[\frac{\epsilon}{4}\big]+\mathbb{E}_{y, y'\sim Q}\big[\frac{\epsilon}{4}\big]+2\mathbb{E}_{x, y\sim P, Q} \big[\frac{\epsilon}{4}\big]=\epsilon
\end{align}
where $(a)$ holds because of triangle inequality, $(b)$ is followed by Tonelli's theorem and Jensen's inequality for absolute value function, and $(c)$ is correct because of the choice of $n$ as mentioned earlier in the proof.
\end{proof}

As a result of the above theorems, we can approximate the MMD in RKHS $\mathcal{H}_{k}$ for a kernel $k(\cdot, \cdot)$ via MMD in RKHS $\widehat{\mathcal{H}}_n\subseteq \mathbb{R}^n$ that is spanned by the first $n$ eigenfunctions weighted by square roots of eigenvalues of the kernel $k(\cdot, \cdot)$. Therefore, in the following section, we focus on finding the eigenfunctions/eigenvalues of a multivariate Gaussian kernel.

\subsection{Generalized Mehler's approximation}
As we have already seen in \eqref{mehler}, Mehler's theorem provides us with an approximation of a one-dimensional Gaussian kernel in terms of Hermite polynomials. To generalize Mehler's theorem to a uniform covergence regime (that enables us to approximate MMD via such feature maps as shown in Lemma \ref{lem:unif_mmd}), and for a multivariate Gaussian kernel, we make use of the following theorem.

\begin{thm}[\cite{slepian1972symmetrized}, Section  6] \label{thm:slepian}
Let the joint Gaussian density kernel $k(\vx, \vy, C):\mathbb{R}^{n}\times \mathbb{R}^{n}\to \mathbb{R}$ be

\begin{equation}
 k(\vx, \vy, C) = \frac{1}{(2\pi)^{n}|C|^{1/2}} \exp \Big(-\frac{1}{2}[\vx, -\vy] C^{-1} [\vx, -\vy]^{T}\Big) 
\end{equation}

where $C$ is a positive-definite matrix as
\begin{equation}
C = \left[ \begin{array}{c c}
    C_{11} & C_{12} \\
    C_{12}^T & C_{22}
\end{array}\right],
\end{equation}

in which $C_{ij}\in \mathbb{R}^{n\times n}$ for $i, j\in \{1, 2\}$, and $C_{11}=C_{22}$. Further, let the integral operator be defined with respect to a measure with density
\begin{equation}
w(\vx) = \frac{1}{\int k(\vx, \vy, C) \partial \vy}.
\end{equation}

Then, the orthonormal eigenfunctions and eigenvalues for such kernel are
\begin{equation}
\psi_{\vk}(\vx) = \sum_{\vl: \|\vl\|_1=\|\vk\|_1}  \big(\sigma_{\|\vk\|_1}(P)^{-1}\big)_{\vk \vl} \frac{\varphi_{\vl}(\vx; C_{11})}{\sqrt{\prod_{i=1}^n l_i!}}, \label{eq:eigen_herm}
\end{equation}

and 
\begin{equation}
\lambda_{\vk} = \prod_{i=1}^n \ve_{i}^{\vk_i/2}. \label{eq:eigen_val_herm}
\end{equation}
Here, $\sigma_{p}(A)$ is symmetrized Kronecker power of a matrix $A$, defined as

\begin{equation}
\big(\sigma_{\|\vk\|_1}(A)\big)_{\vk \vl} = \sqrt{\prod_{i=1}^n k_i! l_i!} \sum_{M\in \mathbb{R}^{n\times n}: M \mathds{1}_n = \vk, \mathds{1}_n^T M = \vl} \frac{\prod_{ij} A_{ij}^{M_ij}}{\prod_{ij} M_{ij}!}, \label{eq:kron_pow}
\end{equation}

for two $n$-dimensional vectors $\vk$ and $\vl$ with $\|\vk\|_1=\|\vl\|_1$, the vector $\ve$ (the matrix $P$) is formed by eigenvalues (eigenvectors) of $C_{11}^{-1}C_{12}$, and $\varphi_{\vl}(\vx, A)$ is generalized Hermite functions defined as

\begin{equation}
\varphi_{\vl} (\vx, A) = \frac{1}{(2\pi)^{n/2} |A|^{1/2}} \frac{\partial^{\|\vl\|_1}}{\partial{x_1}^{l_1}\ldots \partial{x_n}^{l_n}}\exp \big(-\frac{1}{2} \vx^ T A^{-1} \vx\big).
\end{equation}

\end{thm}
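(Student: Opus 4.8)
The plan is to recognize $T_k$ as a multivariate Mehler (Ornstein--Uhlenbeck) operator, to diagonalize the single matrix that carries all of its cross-dependence, and then to lift that diagonalization to each fixed-degree sector of Hermite functions through the symmetrized Kronecker power. The starting observation is that the weight $w$ is exactly the reciprocal of the $C_{11}$-marginal of $k(\cdot,\cdot,C)$, so that $k(\vx,\vy,C)\,w(\vy)$ is a Gaussian \emph{conditional} density; hence $T_k$ averages $f$ against a Gaussian whose entire cross-dependence is controlled by $R:=C_{11}^{-1}C_{12}$, while the common marginal $C_{11}=C_{22}$ only fixes the reference Gaussian. I would therefore work in the basis of $C_{11}$-adapted Hermite functions $\varphi_{\vl}(\cdot;C_{11})$ and use that such a Gaussian integral operator preserves each finite-dimensional space $V_p$ of Hermite functions of fixed total degree $p=\|\vk\|_1$, exactly as in the one-dimensional Mehler identity (\eqref{mehler}).

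On $V_p$, in this Hermite basis, I claim $T_k$ is represented by the symmetrized Kronecker power $\sigma_p(R^{1/2})$, where $R^{1/2}$ has eigenvalues $\sqrt{\ve}$ and eigenvectors $P$. Granting this, diagonalizing $R=P\,\diag(\ve)\,P^{-1}$ and invoking multiplicativity $\sigma_p\big(P\,\diag(\sqrt{\ve})\,P^{-1}\big)=\sigma_p(P)\,\sigma_p\big(\diag(\sqrt{\ve})\big)\,\sigma_p(P)^{-1}$ immediately diagonalizes the degree-$p$ block: the middle factor is diagonal with entries $\prod_i \ve_i^{k_i/2}$, giving the eigenvalue \eqref{eigen_val_herm}, and re-expanding the eigenvectors in the original Hermite basis through $\sigma_p(P)^{-1}$ yields exactly \eqref{eigen_herm}, with the factors $\sqrt{\prod_i l_i!}$ absorbing the non-unit norms of the raw Hermite functions. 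In the special case where $R$ is normal this is literally an orthogonal decoupling into $n$ one-dimensional Mehler formulas, one per coordinate with correlation $\sqrt{\ve_i}$; the appearance of $\sigma_p(P)^{-1}$ rather than a transpose is precisely what accommodates a non-orthonormal $P$ in the general, non-normal case.

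The main obstacle is the pair of structural facts underpinning the previous paragraph: that $T_k|_{V_p}=\sigma_p(R^{1/2})$, and that $\sigma_p$ is multiplicative, $\sigma_p(AB)=\sigma_p(A)\sigma_p(B)$. All of the combinatorics of \eqref{kron_pow} is concentrated here; conceptually these facts say that the degree-$p$ homogeneous Hermite functions carry the $p$-th symmetric tensor power representation of the relevant linear group, and that the matrices $M$ with prescribed row- and column-sums enumerate the Wick contractions that realize it. I would prove multiplicativity by a Cauchy--Binet-type summation over an intermediate configuration between the row-sum and column-sum constraints, and establish the $\sigma_p(R^{1/2})$ identification either through the generating-function contraction that already proves the one-dimensional Mehler formula, or by expanding both sides of the multivariate Mehler identity $k(\vx,\vy,C)=\sum_{\vk}\lambda_{\vk}\,\psi_{\vk}(\vx)\,\psi_{\vk}(\vy)$ and matching coefficients within each total degree. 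Once these are in hand, orthonormality of the $\psi_{\vk}$ with respect to $w\,\partial\vx$, positivity of the $\lambda_{\vk}$, and the summability required to invoke Mercer's theorem all reduce to the corresponding one-dimensional statements.
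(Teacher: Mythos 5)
The first thing to flag is that the paper contains \emph{no proof} of this statement: Theorem~\ref{thm:slepian} is imported verbatim from \cite{slepian1972symmetrized} and used as a black box, its only role being the specialization in Proposition~\ref{prop:mahal} to $C_{11}=C_{22}=\tfrac{1}{2}I_n$, $C_{12}=\tfrac{1}{2\alpha}I_n$, where $P=I_n$ and the symmetrized Kronecker power collapses to the identity. So there is no in-paper proof to compare you against, and your sketch must stand on its own. Its architecture is the right one, and is essentially that of the cited source: invariance of the fixed-total-degree Hermite sectors $V_p$ under the integral operator, identification of the degree-$p$ block with a symmetrized Kronecker power, and multiplicativity $\sigma_p(AB)=\sigma_p(A)\,\sigma_p(B)$ (the $p$-th symmetric tensor power representation), which transports the diagonalization of an $n\times n$ matrix to every block.

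The genuine gap is your central claim $T_k|_{V_p}=\sigma_p(R^{1/2})$, $R=C_{11}^{-1}C_{12}$, which you assert rather than derive, and which is false as stated --- detectably so in the very base case you appeal to. Take $n=1$, $C_{11}=C_{22}=\tfrac{1}{2}$, $C_{12}=\tfrac{1}{2\alpha}$, so that $R=\tfrac{1}{\alpha}$ is exactly the correlation of the underlying Gaussian pair; the operator becomes the one displayed in \eqref{norm_herm_int}, and applying it to the degree-one eigenfunction $\psi_1(x)\propto x e^{-x^2}$ (which is what \eqref{eigen_herm} gives when $P=I_n$) returns eigenvalue $\tfrac{1}{\alpha}$, not the claimed $\lambda_1=\alpha^{-1/2}$; equivalently, the paper's own Mehler expansion \eqref{mehler} with $\rho=1/\alpha$ has eigenvalues $(1-\rho)\rho^c$, decaying like $\alpha^{-c}$. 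So the degree-$p$ block is $\sigma_p(R)$ with no square root, and a coordinatewise Mehler decoupling has per-coordinate correlation $\ve_i$, not $\sqrt{\ve_i}$ as you write. Your square root is reverse-engineered to reproduce the half-powers of \eqref{eigen_val_herm}, but those half-powers are consistent only if $\ve$ denotes the eigenvalues of $C_{11}^{-1}C_{12}C_{22}^{-1}C_{12}^T$ (squared canonical correlations), presumably Slepian's actual convention mis-transcribed in the statement; the slip propagates into Proposition~\ref{prop:mahal}'s $\lambda_{\vk}=\alpha^{-\|\vk\|_1/2}$, which contradicts \eqref{mehler}. A correct proof would have to surface and resolve this discrepancy, not match it. Two smaller points: your fallback route --- expanding both sides of $k(\vx,\vy,C)=\sum_{\vk}\lambda_{\vk}\psi_{\vk}(\vx)\psi_{\vk}(\vy)$ and matching coefficients --- is circular, since that expansion \emph{is} the conclusion being proven; and the eigenvector bookkeeping is not free, because diagonalizing $\sigma_p(P)\,\sigma_p(\diag(\sqrt{\ve}))\,\sigma_p(P)^{-1}$ yields eigenvectors that are columns of $\sigma_p(P)$, while \eqref{eigen_herm} uses rows of $\sigma_p(P)^{-1}$; these coincide only for orthogonal $P$, so the appeal to ``reduces to one dimension'' leaves open precisely the non-orthonormal case in which the theorem says more than a product of one-dimensional Mehler formulas.
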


The above theorem provides us with eigenfunctions/eigenvalues of a joint Gaussian density function. We utilize this theorem to approximate Mahalanobis kernels (i.e., a generalization of Gaussian radial basis kernels where $A=c I_n$) via Hermite polynomials as follow.

\begin{prop}\label{prop:mahal}
A Mahalanobis kernel $k(\vx, \vy, A):\mathbb{R}^{D}\times \mathbb{R}^{D}\to \mathbb{R}$ defined as 
$$k(\vx, \vy, A)= \exp\big(-(\vx-\vy)A(\vx-\vy)^T\big)$$ can be uniformly approximated as

\begin{equation}
k(\vx, \vy, A)\simeq \Big\langle \Phi_N\Big(\sqrt{\frac{\alpha^2-1}{\alpha}}\sqrt{A}\vx\Big), \Phi_N\Big(\sqrt{\frac{\alpha^2-1}{\alpha}}\sqrt{A}\vy\Big)\Big\rangle,
\end{equation}
where $\Phi(\vx)\in N^{D}$ is defined as a tensor product
\begin{equation}
\Phi_N(\vx) = \bigotimes_{i=1}^n [\phi_{k_i}(x_i)]_{k_i=1}^N, \label{eq:tensor_fmap}
\end{equation}

where

\begin{equation}
\phi_{k_i}(x_i)= \left( \frac{(\alpha^2-1)\alpha^{-k_i}}{\alpha^{2} k_i!} \right)^{1/4} \exp \left( \frac{-x_i^2}{\alpha+1} \right) H_{k_i}(x_i)
\end{equation}

\end{prop}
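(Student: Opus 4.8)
The plan is to reduce the $D$-dimensional Mahalanobis kernel to a product of one-dimensional Gaussian kernels, apply the Mehler expansion \eqref{mehler} coordinate-wise, and read off the multivariate eigensystem from Theorem \ref{thm:slepian}. First I would use the positive-definiteness of $A$: writing $A=\sqrt{A}\,\sqrt{A}$ with $\sqrt{A}$ symmetric positive definite, the quadratic form $(\vx-\vy)A(\vx-\vy)\trp$ equals $\|\sqrt{A}(\vx-\vy)\|^2$, so under the linear substitution $\vu=\sqrt{A}\vx$, $\vv=\sqrt{A}\vy$ the kernel becomes the isotropic radial kernel $\exp(-\|\vu-\vv\|^2)=\prod_{i=1}^D\exp(-(u_i-v_i)^2)$. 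This factorization is the crux, since it makes the general eigensystem of Theorem \ref{thm:slepian} degenerate: in the isotropic case $C_{11}^{-1}C_{12}$ is a scalar multiple of the identity, its eigenvector matrix $P$ may be taken to be the identity, the symmetrized Kronecker power $\sigma_{\|\vk\|_1}(P)$ of \eqref{kron_pow} becomes diagonal, and hence the eigenfunction \eqref{eigen_herm} collapses to a product of one-dimensional Hermite functions while the eigenvalue \eqref{eigen_val_herm} collapses to $\prod_i$ of a common ratio raised to $k_i$. This is precisely the tensor-product feature map \eqref{tensor_fmap}.

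Next I would pin down the constants, handling each one-dimensional factor by the Mehler series \eqref{mehler}. Matching the quadratic-form coefficient $\tfrac{\rho}{1-\rho^2}$ of \eqref{mehler}, evaluated at the scaled argument $s\,u$ with $s=\sqrt{(\alpha^2-1)/\alpha}$, against the target coefficient $1$ of $\exp(-(u-v)^2)$ forces $\rho=1/\alpha$ together with $s^2=(\alpha^2-1)/\alpha$. With this identification $\tfrac{\rho}{1+\rho}=\tfrac{1}{\alpha+1}$ reproduces the Gaussian weight $\exp(-x_i^2/(\alpha+1))$ appearing in $\phi_{k_i}$, the eigenvalue $\lambda_c=(1-\rho)\rho^c$ becomes proportional to $\alpha^{-k_i}$, and absorbing $\sqrt{\lambda_c}$ together with the Slepian normalizer $1/\sqrt{k_i!}$ into $H_{k_i}$ yields the quartic-root prefactor $\big((\alpha^2-1)\alpha^{-k_i}/(\alpha^2 k_i!)\big)^{1/4}$. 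Thus the coordinate-wise inner product $\sum_{k_i}\phi_{k_i}(s u_i)\phi_{k_i}(s v_i)$ equals the Mehler series for the $i$-th factor, which is $\exp(-(u_i-v_i)^2)$ in the limit.

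Finally I would assemble the product and argue uniform convergence. Taking the tensor product of the $D$ one-dimensional maps gives $\langle\Phi_N(s\sqrt{A}\vx),\Phi_N(s\sqrt{A}\vy)\rangle=\prod_{i=1}^D\big(\sum_{k_i=1}^N\phi_{k_i}\phi_{k_i}\big)$, whose $N\to\infty$ limit is the product of the Mehler series, i.e.\ the isotropic kernel, hence $k(\vx,\vy,A)$ after undoing the substitution. Uniform convergence of each one-dimensional truncation follows from the uniform-convergence clause of Mercer's theorem (Theorem \ref{thm:marcer}) applied to the Gaussian kernel, and a finite product of uniformly bounded, uniformly convergent factors is again uniformly convergent, upgrading the pointwise identity to the claimed uniform approximation. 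The main obstacle I anticipate is the second step: verifying rigorously that the symmetrized-Kronecker-power eigenfunctions of Theorem \ref{thm:slepian} genuinely reduce to the clean tensor form in the isotropic case, and then carrying the Mehler normalization through the dilation $s=\sqrt{(\alpha^2-1)/\alpha}$ without arithmetic slips, since the normalization is split simultaneously across the quartic-root prefactor, the Gaussian weight, and the eigenvalue.
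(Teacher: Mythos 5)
Your proposal is correct in substance but follows a genuinely different route from the paper. The paper's proof never factorizes the kernel: it applies Slepian's Theorem~\ref{thm:slepian} to the joint Gaussian \emph{density} kernel built from the block matrix with $C_{11}=C_{22}=\tfrac{1}{2}I_n$ and $C_{12}=\tfrac{1}{2\alpha}I_n$, transfers that eigensystem to the translation-invariant kernel $k'(\vx,\vy)=\exp\big(-\tfrac{\alpha}{\alpha^2-1}\|\vx-\vy\|^2\big)$ via the exponential tilting $\psi'_{\vk}(\vx)=\psi_{\vk}(\vx)\exp\big(\tfrac{\alpha}{\alpha+1}\|\vx\|^2\big)$ together with a matching change of weight function, verifies orthonormality, observes that $C_{11}^{-1}C_{12}=\tfrac{1}{\alpha}I_n$ makes the symmetrized Kronecker power degenerate so that the eigenfunctions are products of one-dimensional Hermite functions, and undoes the substitution $\sqrt{(\alpha^2-1)/\alpha}\,\sqrt{A}$ only in the final line. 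You do the substitution first, factor the isotropic kernel into one-dimensional Gaussian factors, and expand each factor by Mehler's formula \eqref{mehler}. Your closing argument --- per-coordinate uniform convergence plus the fact that a finite product of uniformly bounded, uniformly convergent sequences converges uniformly, with boundedness available from the diagonal bound $\sum_{c\le N}\lambda_c f_c(u)^2\le 1$ --- is a complete and more elementary substitute for the paper's single invocation of Mercer's theorem on the multivariate kernel. It also makes your appeal to Theorem~\ref{thm:slepian} redundant: once the kernel is a product of one-dimensional kernels, Mehler per coordinate already yields the tensor structure, which is the only thing the paper extracts from Slepian.

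The one genuine weak point is the constant-matching step, exactly where you anticipated trouble. With $\rho=1/\alpha$, the Mehler expansion of $\exp\big(-\tfrac{\alpha}{\alpha^2-1}(u-v)^2\big)$ has $\lambda_c=(1-\tfrac{1}{\alpha})\alpha^{-c}$ and $N_c=2^c c!\sqrt{\tfrac{1-\rho}{1+\rho}}$, so the per-coordinate feature $\sqrt{\lambda_c}\,f_c$ carries the prefactor $(\lambda_c/N_c)^{1/2}=\Big(\tfrac{(\alpha^2-1)\,(2\alpha)^{-2c}}{\alpha^2\,(c!)^2}\Big)^{1/4}$, not the displayed $\Big(\tfrac{(\alpha^2-1)\,\alpha^{-c}}{\alpha^2\, c!}\Big)^{1/4}$: the geometric factor and the factorial enter squared, and an extra $4^{-c}$ appears. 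Your phrase ``absorbing $\sqrt{\lambda_c}$ together with the Slepian normalizer $1/\sqrt{k_i!}$'' conflates two different eigendecompositions taken with respect to different weight measures; their eigenvalue decays genuinely differ ($\alpha^{-c}$ versus $\alpha^{-c/2}$), so no bookkeeping reconciles them. Strictly speaking, your argument therefore establishes uniform approximability by a tensor-product Hermite feature map of this \emph{form}, but with constants different from those printed in Proposition~\ref{prop:mahal}. I would not weigh this heavily against you: the paper's own proof has the same mismatch --- it derives the per-coordinate prefactor $\big(\tfrac{(\alpha^2-1)\alpha^{-k_i}}{\alpha^2 (k_i!)^2}\big)^{1/4}$, with $(k_i!)^2$ where the statement prints $k_i!$ --- and your Mehler constants are the ones that pass the diagonal sanity check $k'(x,x)=1$ when the expansion is summed at $x=y$. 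But a finished writeup must either re-derive the statement's constants or correct them explicitly.
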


\begin{remark}
 \label{rem: unif_gen}
Using Proposition \ref{prop:mahal} and Lemma \ref{lem:unif_mmd}, we can show that the MMD based on the tensor feature map in \eqref{tensor_fmap} and between any two distributions approximates the real MMD based on Gaussian kernel with Mahalanobis norm.  
\end{remark}

\begin{proof}[Proof of Proposition \ref{prop:mahal}]
Let $C=\left[\begin{array}{cc}
      \frac{1}{2}I_n& \frac{1}{2\alpha}I_n  \\
     \frac{1}{2\alpha}I_n&  \frac{1}{2}I_n 
\end{array}\right]$, or equivalently $C^{-1}=\left[\begin{array}{cc}
     \frac{2\alpha^2}{\alpha^2-1} I_n& -\frac{2\alpha}{\alpha^2-1} I_n  \\
     -\frac{2\alpha}{{\alpha^2-1}} I_n& \frac{2\alpha^2}{\alpha^2-1} I_n 
\end{array}\right]$, for $\alpha\in [1, \infty)$.

Since $C$ is positive-definite, we can define a Gaussian density kernel as
\begin{equation}
k(\vx, \vy, C)= \frac{1}{(\frac{\pi\sqrt{\alpha^2-1}}{2\alpha})^n} \exp \big(-\frac{\alpha^2}{\alpha^2-1}\|\vx\|^2-\frac{\alpha^2}{\alpha^2-1}\|\vy\|^2+\frac{2\alpha}{\alpha^2-1} \vy  \cdot \vx^T\big).
\end{equation}

Moreover, we can calculate the integration over all values of~$\vy$ as
\begin{equation}
\int k(\vx, \vy, C)\partial \vy =\int \frac{\exp\big(-\|\vx\|^2\big)}{(\frac{\pi\sqrt{\alpha^2-1}}{2\alpha})^n} \exp\big(-\frac{\|\alpha\vy-{\vx }\|^2}{(\alpha^2-1)}\big)\partial \vy 
 =\frac{\exp\big(-\|\vx\|^2\big)}{({\pi})^{n/2}}.\label{eq:k_int}
\end{equation}

Next, by setting $w(\vx)=\frac{1}{\int k(\vx, \vy, C)\partial \vy}$ and using Theorem \ref{thm:slepian}, we have
\begin{equation}
\int \frac{1}{(\pi\frac{\alpha^2-1}{\alpha^2})^{n/2}} \psi_{\vk} (\vx) \exp\big(-\frac{\|\alpha\vy-{\vx }\|^2}{\alpha^2-1}\big)\partial \vx = \lambda_{\vk}\psi_{\vk} (\vy). \label{eq:norm_herm_int}
\end{equation}

Now to find the eigenfunctions of the Gaussian kernel $k'(\vx, \vy)= \exp\big(-\frac{\alpha\|\vx~-~\vy\|^2}{(\alpha^2-1)}\big)$, we let $\psi_{\vk}'(\vx)={\psi_{\vk}(\vx)} \exp\big(\frac{\alpha}{\alpha+1}\|\vx\|^2\big)$ and let the weight function be $w'(\vx)=~{(\pi)^{n/2}}\exp\big(-~\frac{(\alpha-1)}{\alpha+1}\|\vx\|^2\big)$. As a result of such assumptions, we see that
\begin{align}
\int \psi_{\vk}'(\vx)&k'(\vx, \vy)w'(\vx)\partial \vx \nonumber\\&=\int (\pi)^{n/2}{\psi_{\vk}(\vx)}{} \exp\big(-\frac{1}{\alpha^2-1}\|\vx\|^2-\frac{\alpha}{\alpha^2-1}\|\vy\|^2+\frac{2\alpha}{\alpha^2-1} \vx\cdot \vy^T\big)\partial\vx\\
&=(\pi)^{n/2}{\exp \big(\frac{\alpha}{\alpha+1}\|\vy\|^2\big)} \int \psi_{\vk} (\vx) \exp\big(-\frac{\|\alpha\vy-{\vx}\|^2}{\alpha^2-1}\big)\partial \vx\\
&\overset{(a)}{=} (\pi)^{n/2}{\exp \big(\frac{\alpha}{\alpha+1}\|\vy\|^2\big)}  \sqrt{\lambda_{\vk}}\psi_{\vk}(\vy)\Big(\frac{\pi(\alpha^2-1)}{\alpha^2}\Big)^{n/2}\\
&\overset{(b)}{=} (\pi)^{n}{\Big(\frac{\alpha^2-1}{\alpha^2}\Big)^{n/2} \lambda_{\vk}\psi_{\vk}'(\vy)}, \label{eq:lambda_psi_prime}
\end{align}

where $(a)$ holds because of \eqref{norm_herm_int}, and $(b)$ is followed by the definition of $\psi_{\vk}'(\vy)$. 
As a result, $\psi_{\vk}'(\vx)$ is an eigenfunction of the integral operator with kernel $k'(\vx, \vy)$ and with weight function $w'(\vx)$.  

Equation \eqref{lambda_psi_prime} shows that the eigenvalue of $k'(\vx, \vy)$ corresponding to $\psi_{\vk}(\vx)$  is as
\begin{equation}
\lambda'_{\vk} = (\pi)^{n}\Big(\frac{\alpha^2-1}{\alpha^2}\Big)^{n/2} \lambda_{\vk}
\end{equation}

Now we show that such eigenfunctions are orthonormal. Deploying the idea in \eqref{lambda_psi_prime}, for two eigenfunctions $\psi'_{\vk}(\cdot)$ and $\psi'_{\vl}(\cdot)$ for fixed vectors $\vk, \vl \in \mathbb{N}^{n}$, we have
\begin{equation}
 \int \psi_{\vk}'(\vy)\psi_{\vl}'(\vy)w'(\vy)\partial \vy
\overset{(a)}{=} \int\psi_{\vk}(\vy)\psi_{\vl}(\vy)\frac{(\pi)^{n/2}}{\exp\big(-\|\vx\|^2\big)}\partial \vy\\
\overset{(b)}{=} \int \psi_{\vk}(\vy)\psi_{\vl}(\vy)w(\vy)\overset{(c)}{=}\delta[\vl-\vk],
\end{equation}

where $(a)$ is followed by the definition of eigenfunctions $\psi'_{\vk}(\cdot), \psi_{\vl}'(\cdot)$ and the definition of weight function $w'(\vx)$, $(b)$ is due to the definition of $w(\vx)$ and \eqref{k_int}, and $(c)$ holds because  of orthonormality of $\psi_{\vk}$s as a result of Theorem \ref{thm:slepian}. 

Further, in this case we have $C_{11}^{-1}C_{12} = \frac{1}{\alpha} I_{n}$, or equivalently $P=I_n$ and  $\ve = \frac{1}{\alpha} \mathds{1}_n$. Hence, firstly using \eqref{eigen_val_herm}, one can see that
\begin{equation}
\lambda_{\vk} = \alpha^{-\|\vk\|/2}.
\end{equation}
Secondly, in finding symmetrized Kronecker power $\sigma_{\|k\|_1} (P)$ in \eqref{kron_pow}, for non-diagonal matrices $M$, the term $\prod_{ij} P_{ij}^{M_{ij}}=0$. Further, for a diagonal matrix $M$, we have $M \mathds{1}_n = \mathds{1}_{n} M$. This induces the fact that 

\begin{equation}
\sigma_{\|k\|_1} (P) = \left\{\begin{array}{c c}
    0 &  \vk\neq \vl, \\
    1 &  \vk =\vl
\end{array} \right. .
\end{equation}
This shows that 
\begin{equation}
\psi_{\vl}(\vx)= \frac{\varphi_{\vl}(\vx)}{\sqrt{\prod_{i=1}^n l_i!}}.
\end{equation}

To find the formulation of eigenfunction $\psi_k(\vx)$, we can rewrite the term $\varphi_{\vl}(\vx, C_{11})$ in \eqref{eigen_herm} for $C_{11}=\frac{1}{2} I_n$ as 
\begin{equation}
\varphi_{\vl} (\vx,  I)= \frac{1}{({\pi})^{n/2} } \frac{\partial^{\|\vl\|_1}}{\partial{x_1}^{l_1}\ldots \partial{x_n}^{l_n}} \exp \Big(-\sum_{i=1}^n x_i^2\Big). \label{eq:phi_x_alpha_I}
\end{equation}

We note that the exponential function can be written as the product of functions that are only dependent on one variable $x_i$ for $i\in [n]$. Hence, we can rephrase \eqref{phi_x_alpha_I} as a product of the derivative of each function as
\begin{equation}
\varphi_{\vl} (\vx,  I) = \prod_{i=1}^n \frac{1}{\sqrt{{\pi}}} \frac{\partial^{l_i}}{\partial^{l_i} x_i} \exp \big(- x_i^2\big).
\end{equation}
As a result of this equation and the definition of Hermite functions in one dimension, we have
\begin{equation}
\varphi_{\vl} (\vx,  I) = \frac{\exp(-\|\vx\|^2)}{({\pi})^{n/2}}\prod_{i=1}^n  H_{l_i}(x_i)
\end{equation}
Hence, we can calculate $\psi'_{\vk}(\vx)$ as
\begin{equation}
\psi'_{\vk}(\vx)= \frac{1}{\sqrt{(\pi)^n  \prod_{i=1}^n k_i!}} \exp(\frac{-\|\vx\|^2}{\alpha+1}) \prod_{i=1}^n H_{k_i}(x_i).
\end{equation}

Using above discussion, we see that $\vk$-th element $[\Phi_N(\vx)]_{\vk}$ of the tensor $\Phi_N(x)$, which is defined in the proposition statement, is equal to
\begin{equation}
[\Phi_N(\vx)]_{\vk}=\sqrt{\lambda'_{\vk}} \psi'_{\vk}(\vx).
\end{equation}

This fact and Theorem \ref{thm:marcer} concludes that we can uniformly approximate $k'(\vx, \vy)$ as
\begin{equation}
k'(\vx, \vy) = \langle \Phi_N(\vx), \Phi_N(\vy)\rangle. 
\end{equation}

Further, for any positive-definite matrix $A$, since the singular values of $\sqrt{\frac{\alpha^2-1}{\alpha}}\sqrt{A}$ are bounded, one can uniformly approximate $k''(\vx, \vy):=\exp\big(-(\vx-\vy) A (\vx-\vy)^T\big)=k'\Big(\sqrt{\frac{\alpha^2-1}{\alpha}}\sqrt{A} \vx, \sqrt{\frac{\alpha^2-1}{\alpha}}\sqrt{A} \vy\Big)$ as
\begin{equation}
k''(\vx, \vy) \simeq \Big\langle \Phi_N\Big(\sqrt{\frac{\alpha^2-1}{\alpha}}\sqrt{A}\vx\Big), \Phi_N\Big(\sqrt{\frac{\alpha^2-1}{\alpha}}\sqrt{A}\vy\Big)\Big\rangle
\end{equation}

\end{proof}

\section{Sum-kernel upper-bound} \label{supp:sum_kernel}
Instead of using Generalized Hermite mean embedding which takes a huge amount of memory, one could use an upper bound to the joint Gaussian kernel. We use the inequality of arithmetic and geometric means to prove that.

\begin{align}
k(\vx, \vy)= \exp\Big(-\frac{1}{2l^2} (\vx-\vy)^T(\vx-\vy)\Big) &= \exp ( -\frac{1}{2l^2}\sum_{d=1}^D (x_d-y_d)^2\Big)\label{eq:kern}\\
&= \prod_{d=1}^D \exp \Big(-\frac{1}{2l^2}(x_d-y_d)^2\Big)\\
&\overset{(a)}{\leq} \frac{1}{D}\sum_{d=1}^D \exp\Big(-\frac{D}{2l^2}(x_d-y_d)^2\Big) \\
&= \frac{1}{D} \sum_{d=1}^D k_{X_d}(x_d, y_d),
\end{align}
where $(a)$ holds due to inequality of arithmetic and geometric means (AM-GM), and $k_{X_d}(\cdot, \cdot)$ is defined as
\begin{equation}
k_{X_d}(x_d, y_d):=\exp\Big(-\frac{D}{2l^2}(x_d-y_d)^2\Big).\label{eq:kern_marg}
\end{equation}

Next, we approximate such kernel via an inner-product of the feature maps 
\begin{align}\label{eq:feature_map_kern}
    \vphi_C(\vx) &=     \begin{bmatrix} 
    \vphi^{(C)}_{HP,1}(x_{1})/\sqrt{D}   \\
    \vphi^{(C)}_{HP,2}(x_{2})/\sqrt{D}   \\
    \vdots \\
     \vphi^{(C)}_{HP,D}(x_{D})/\sqrt{D} 
    \end{bmatrix}   \in \mathbb{R}^{( (C+1) \cdot D) \times 1}.
\end{align}

Although such feature maps are not designed to catch correlation among dimensions, they provide us with a guarantee on marginal distributions as follows.

\begin{lem} \label{lem:marginal_mmd}
Define $k_{X_i}(\cdot, \cdot)$ as in \eqref{kern_marg} and define $\phi_{C}(\vx)$ as in \eqref{feature_map_kern}. 
For $\epsilon \in \mathbb{R}^{+}$, there exists $N$ such that for $C\geq N$ we have
\begin{itemize}
    \item $\big\| \mathbb{E}_{\vx\sim P}\big[\phi_{C}(\vx)\big]- \mathbb{E}_{\vy\sim Q}\big[\phi_{C}(\vy)\big] \big\|_2\leq \epsilon \Rightarrow {\rm MMD}_{k_{X_i}}(P_i, Q_i)\leq \sqrt{D+1}\epsilon$ for every $i\in \{1, \ldots, D\}$, and
    \item ${\rm MMD}_{k_{X_i}}(P_i, Q_i)\leq \epsilon $ for every $i\in\{1, \ldots, D\}$ $\Rightarrow \big\| \mathbb{E}_{\vx\sim P}\big[\phi_{C}(\vx)\big]~-~ \mathbb{E}_{\vy\sim Q}\big[\phi_{C}(\vy)\big] \big\|\leq~\sqrt{2}\epsilon$,
\end{itemize}
where $P_i$ and $Q_i$ are marginal probability distributions corresponding to $P$ and $Q$, respectively.
\end{lem}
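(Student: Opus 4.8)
The plan is to exploit the block structure of the stacked feature map $\phi_C$ to reduce the statement to a per-coordinate claim, and then invoke the uniform MMD-approximation result (Lemma \ref{lem:unif_mmd}) on each one-dimensional marginal kernel $k_{X_i}$. The first observation I would make is that, because $\phi_C(\vx)$ stacks the blocks $\phi^{(C)}_{HP,d}(x_d)/\sqrt{D}$ and each block depends only on the coordinate $x_d$, the mean embedding separates across coordinates: the $d$-th block of $\mathbb{E}_{\vx\sim P}[\phi_C(\vx)]$ equals $\mathbb{E}_{x_d\sim P_d}[\phi^{(C)}_{HP,d}(x_d)]/\sqrt{D}$, where $P_d$ is the $d$-th marginal of $P$. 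Writing $\Delta_d := \mathbb{E}_{P_d}[\phi^{(C)}_{HP,d}] - \mathbb{E}_{Q_d}[\phi^{(C)}_{HP,d}]$, the squared embedding gap factorizes as
\[
\big\| \mathbb{E}_{\vx\sim P}[\phi_C(\vx)] - \mathbb{E}_{\vy\sim Q}[\phi_C(\vy)] \big\|_2^2 = \tfrac{1}{D}\sum_{d=1}^D \|\Delta_d\|_2^2 ,
\]
where each $\|\Delta_d\|_2^2$ is exactly the squared MMD between the marginals $P_d,Q_d$ computed in the finite-dimensional approximating RKHS $\widehat{\mathcal{H}}_C$ spanned by $\phi^{(C)}_{HP,d}$.

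Next I would apply Lemma \ref{lem:unif_mmd} coordinate-wise. Since every $k_{X_d}$ is a one-dimensional Gaussian kernel with the same rescaled length scale, the Mehler/Mercer approximation applies uniformly, so for any tolerance $\eta>0$ there is a single order $N$ such that $C\geq N$ gives $\big|\mathrm{MMD}^2_{k_{X_d}}(P_d, Q_d) - \|\Delta_d\|_2^2\big| < \eta$ for every $d$ and every pair of marginals. For the first implication, the hypothesis $\|\cdot\|_2 \leq \epsilon$ yields $\frac{1}{D}\sum_d \|\Delta_d\|_2^2 \leq \epsilon^2$, hence $\|\Delta_i\|_2^2 \leq D\epsilon^2$ for each fixed $i$, and the approximation bound then gives $\mathrm{MMD}^2_{k_{X_i}}(P_i,Q_i) \leq D\epsilon^2 + \eta$. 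For the second implication, the hypothesis $\mathrm{MMD}_{k_{X_i}} \leq \epsilon$ for all $i$ gives $\|\Delta_i\|_2^2 \leq \epsilon^2 + \eta$, and averaging over $i$ leaves $\frac{1}{D}\sum_i\|\Delta_i\|_2^2 \leq \epsilon^2 + \eta$.

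Finally I would pin down the constants by choosing the approximation tolerance $\eta = \epsilon^2$ (and $N$ accordingly). This turns the first bound into $\mathrm{MMD}^2_{k_{X_i}} \leq (D+1)\epsilon^2$, i.e.\ $\mathrm{MMD}_{k_{X_i}} \leq \sqrt{D+1}\,\epsilon$, and the second into $\|\cdot\|_2^2 \leq 2\epsilon^2$, i.e.\ $\|\cdot\|_2 \leq \sqrt{2}\,\epsilon$, matching the claim. The step I would be most careful about is this bookkeeping that ties the approximation tolerance to $\epsilon$: the clean constants $\sqrt{D+1}$ and $\sqrt{2}$ emerge precisely because the $1/\sqrt{D}$ normalization of the blocks produces the $1/D$ averaging factor, and because the per-coordinate approximation error is allowed to absorb exactly one extra $\epsilon^2$. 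I would also confirm that the uniform-over-distributions guarantee of Lemma \ref{lem:unif_mmd} legitimately applies to all $D$ marginal pairs with a common order $N$, which holds since the $D$ marginal kernels share an identical functional form.
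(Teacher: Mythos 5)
Your proof is correct and follows essentially the same route as the paper: decompose the stacked embedding gap into per-coordinate blocks (so the squared gap equals $\tfrac{1}{D}\sum_d\|\Delta_d\|_2^2$), apply Lemma~\ref{lem:unif_mmd} to each marginal kernel $k_{X_i}$ with a single order $N$, and tie the approximation tolerance to $\epsilon^2$ to obtain the constants $\sqrt{D+1}$ and $\sqrt{2}$. If anything, your bookkeeping (tolerance $\epsilon^2$, per-coordinate gap at most $D\epsilon^2$) is the internally consistent version of the paper's own, which states the uniform-approximation tolerance as $D\epsilon^2$ yet bounds the per-coordinate gap by $\epsilon^2$, a factor-of-$D$ mismatch that your argument silently repairs while yielding exactly the claimed constants.
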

\begin{proof}
Since $\vphi_{HP i}^{(C)}(x_i)$ has the certain form as in Theorem \ref{thm:marcer}, then Lemma \ref{lem:unif_mmd} shows that we can use such feature maps to uniformly approximate the MMD in an RKHS based on the kernel $k_i(x_i, y_i)= \exp\big(-\frac{1}{2l^2}(x_i-y_i)^2\big)$. As a result, there exists $N$ such that for $C\geq N$, we have
\begin{equation}
\Big|\big\|\mathbb{E}_{x_i\sim P_i}\big[\phi^{(C)}_{HP, i}(x_i)\big]-\mathbb{E}_{y_i\sim Q_i}\big[\phi^{(C)}_{HP,i}(y_i)\big] \big\|_2^2-{\rm MMD}_{k_{X_i}}^2(P_i, Q_i)\Big|\leq D\epsilon^2.\label{eq:marg_uniform_appr}
\end{equation}

Now we prove the first part. Knowing
\begin{equation}
\big\|\mathbb{E}_{\vx\sim P}\big[\phi_{C}(\vx)\big]-\mathbb{E}_{\vy\sim Q}\big[\phi_{C}(\vy)\big] \big\|_2\leq \epsilon,
\end{equation}
and by the definition of $\phi_C(\cdot)$, we deduce that
\begin{equation}
\big\|\mathbb{E}_{x_i\sim P_i}\big[\phi^{(C)}_{HP, i}(x_i)\big]-\mathbb{E}_{y_i\sim Q_i}\big[\phi^{(C)}_{HP,i}(y_i)\big] \big\|_2^2\leq \epsilon^2.
\end{equation}
Using this and \eqref{marg_uniform_appr} we can prove the first part.

Inversely, by setting ${\rm MMD}_{k_{X_i}}(P_i, Q_i)\leq \epsilon$ and \eqref{marg_uniform_appr}, one sees that
\begin{equation}
\big\|\mathbb{E}_{x_i\sim P_i}\big[\phi^{(C)}_{HP, i}(x_i)\big]-\mathbb{E}_{y_i\sim Q_i}\big[\phi^{(C)}_{HP,i}(y_i)\big] \big\|_2\leq \sqrt{2}\epsilon.
\end{equation}
This coupled with the definition of $\Phi_C$ completes the second part of lemma.
\end{proof}

\section{$\phi$ Recursion}\label{supp:recursion}
\begin{align}
\phi_{k+1} (x) 
&= \left((1+\rho)(1-\rho)\right)^{\frac{1}{4}} \frac{\rho^{\frac{k+1}{2}}}{\sqrt{2^{k+1} (k+1)!}} H_{k+1}(x) \exp\left(-\frac{\rho}{\rho+1}x^2\right), \quad \mbox{by definition} \nonumber
\\
&= \left((1+\rho)(1-\rho)\right)^{\frac{1}{4}} \frac{\rho^{\frac{k+1}{2}}}{\sqrt{2^{k+1} (k+1)!}} \left[ 2x H_k(x)  - 2k  H_{k-1}(x) \right] \exp\left(-\frac{\rho}{\rho+1}x^2\right), \nonumber
\\
&= \frac{\sqrt{\rho}}{\sqrt{2(k+1)}} 2x \phi_k(x) - \frac{\rho}{\sqrt{k(k+1)}} k \phi_{k-1}(x). 
\end{align} 

\section{Sensitivity of mean embeddings (MEs)}\label{supp:sensitivity}
\subsection{Sensitivity of ME under the sum kernel}
Here we derive the sensitivity of the mean embedding corresponding to the sum kernel.
\begin{align}
    S_{\widehat\vmu^s_P} &= \max_{\Dat, \Dat'} \|\widehat\vmu^s_P(\Dat) - \widehat\vmu^s_P(\Dat') \|_F = \max_{\Dat, \Dat'}\|  \tfrac{1}{m} \sum_{i=1}^m \vh_s(\vx_i) \vf(\vy_i)^T - \tfrac{1}{m} \sum_{i=1}^m \vh_s(\vx'_i) \vf(\vy'_i)^T   \|_F \nonumber
\end{align} where $\|\cdot\|_{F}$ represents the Frobenius norm. 
% Notice that the dimension of $ \vh(\vx_i) \vf(\vy_i)^T$ is ${CD \times L}$. If there is only a single entry difference (by replacement) between $\Dat$ and $\Dat'$, only two columns  $ \tfrac{1}{m} \sum_{i=1}^m \vh(\vx_i) \vf(\vy_i)^T$ will be affected by this change. Hence, 
%
Since $\Dat$ and $\Dat'$ are neighbouring, then $m-1$ of the summands on each side cancel and we are left with the only distinct datapoints, which we denote as $(\vx, \vy)$ and $(\vx', \vy')$. We then apply the triangle inequality and the definition of $\vf$. As $\vy$ is a one-hot vector, all but one column of $\vh_s(\vx) \vf(\vy)\trp$ are 0, so we omit them in the next step:
\begin{align}
    S_{\vmu^s_P}
    &= \max_{(\vx, \vy), (\vx', \vy')}\|  \tfrac{1}{m} \vh_s(\vx) \vf(\vy)^T - \tfrac{1}{m} \vh_s(\vx') \vf(\vy')^T   \|_F \nonumber \\
    &\leq  \max_{(\vx, \vy)}  \tfrac{2}{m} \| \vh_s(\vx) \vf(\vy)^T \|_F=  \max_{\vx}  \tfrac{2}{m} \| \vh_s(\vx) \|_2. \label{eq:sens}
\end{align}

We recall the definition of the feature map given in \eqref{feature_map},
\begin{align}
    \| \vh_s(\vx) \|_2 
    &= \frac{1}{\sqrt{D}}\left(\sum_{d=1}^D \|\vphi_{HP, d}^{(C)}(x_d)\|_2^2 \right)^{\frac{1}{2}}. \label{eq:norm_phi}
\end{align}    
To bound $\|\vh_s(\vx)\|_2$, we first prove that $\|\vphi^{(C)}_{HP, d}(x_d)\|_2^2\leq 1$. 
% Once proven, using \eqref{norm_phi}, one can simply show that $\|\phi(\vx)\|_2\leq 1$ and complete the proof.
Using Mehler's formula (see \eqref{mehler}), and by plugging in $y=x_d$, one can show that
\begin{equation}
1 = \exp \Big(-\frac{\rho}{1-\rho^2}(x_d-x_d)^2\Big) =\sum_{c=0}^{\infty} \lambda_c f_c(x_d)^2.
\end{equation}

Using this, we rewrite the infinite sum in terms of the $C$th-order approximation and the rest of summands to show that
\begin{equation}
1=\sum_{c=0}^{\infty} \lambda_c f^2_c(x_d) \overset{(a)}{=} \|\vphi_{HP, d}^{(C)}(x_d)\|_2^2+ \sum_{c=C+1}^{\infty} \lambda_c f^2_c(x)\overset{(b)}{\geq} \|\vphi_{HP, d}^{(C)}(x_d)\|_2^2, \label{eq:bound_phi_norm}
\end{equation}
where $(a)$ holds because of the definition of $\vphi^{(C)}_{HP, d}(x_d)$ in \eqref{HP}:  $\|\vphi^{(C)}_{HP, d}(x_d)\|_2^2 = \sum_{c=0}^C \lambda_c f^2_c(x_d)$, and $(b)$ holds, because $\lambda_c$ and $f_c^2(x)$ are non-negative scalars.

Finally, deploying  \eqref{sens}, \eqref{norm_phi}, and \eqref{bound_phi_norm}, we bound the sensitivity as 
\begin{align}
    S_{\vmu_P}
    &\leq \max_{\vx}  \tfrac{2}{m} \| \vh_s(\vx) \|_2 \leq \tfrac{2}{m \sqrt{D}} \sqrt{D} =   \tfrac{2}{m}. 
\end{align}

\subsection{Sensitivity of ME under the product kernel}
Similarly, we derive the sensitivity of the mean embedding corresponding to the product kernel.
\begin{align}
    S_{\widehat\vmu^p_P} &= \max_{\Dat, \Dat'} \|\widehat\vmu^p_P(\Dat) - \widehat\vmu^p_P(\Dat') \|_F 
    % = \max_{\Dat, \Dat'}\|  \tfrac{1}{m} \sum_{i=1}^m \vh_p(\vx_i^{D_{prod}}) \vf(\vy_i)^T - \tfrac{1}{m} \sum_{i=1}^m \vh_p(\vx'_i^{D_{prod}}) \vf(\vy'_i)^T   \|_F 
    \leq \max_{\vx}  \tfrac{2}{m} \| \vh_p(\vx^{D_{prod}}) \|_2 \nonumber 
\end{align}
Given the definition in \eqref{full_feature_map_prod}, the L2 norm is given by 
\begin{align}
    \tfrac{2}{m}\| \vh_p(\vx^{D_{prod}}) \|_2 &=\tfrac{2}{m} \prod_{d=1}^{D_{prod}} \|\vphi_{HP}^{(C)}(x_d)\|_2, \\
    &\leq \tfrac{2}{m}
\end{align} where the last line is due to \eqref{bound_phi_norm}.

\section{Descriptions on the tabular datasets}\label{supp:app_results}

In this section we give more detailed information about the tabular datasets used in our experiments. Unless otherwise stated, the datasets were obtained from the UCI machine learning repository \cite{Dua:2019}.

\textbf{Adult}

Adult dataset contains personal attributes like age, gender, education, marital status or race from the different dataset participants and their respective income as the label (binarized by a threshold set to 50K). The dataset is publicly available at the UCI machine learning repository at the following link:  \url{https://archive.ics.uci.edu/ml/datasets/adult}.

\textbf{Census}

The Census dataset is also a public dataset that can be downloaded via the SDGym package \footnote{SDGym package website: \url{https://pypi.org/project/sdgym/}}.
This is a clear example of an imbalaned dataset since only 12382 of the samples are considered positive out of a total of 199523 samples.

\textbf{Cervical}

The Cervical cancer dataset comprises demographic information, habits, and historic medical records of 858 patients and was created with the goal to identify the cervical cancer risk factors. The original dataset can be found at the following link: \url{https://archive.ics.uci.edu/ml/datasets/Cervical+cancer+\%28Risk+Factors\%29}.

%Due to the missing data, we pre-processed the dataset following the recomendations on \url{https://www.kaggle.com/saflynn/cervical-cancer-lynn} and removed data with most misssing values and replaced the rest with the category mean value.

\textbf{Covtype}

 This dataset contains cartographic variables from four wilderness areas located in the Roosevelt National Forest of northern Colorado and the goal is to predict forest cover type from the  7 possible classes. The data is publicly available at \url{https://archive.ics.uci.edu/ml/datasets/covertype}.

\textbf{Credit}

The Credit Card Fraud Detection dataset contains the categorized information of credit card transactions made by European cardholders during September 2013 and the corresponding label indicating if the transaction was fraudulent or not. The dataset can be found at: \url{https://www.kaggle.com/mlg-ulb/creditcardfraud}. The original dataset has a total number of 284807 samples where only 492 of them are frauds. In our experiments, we descarded the feature related to the time the transaction was done. The data is released under a Database Contents License (DbCL) v1.0.

\textbf{Epileptic}

The Epileptic Seizure Recognition dataset contains the brain activity measured in terms of the EEG across time. The dataset can be found at \url{https://archive.ics.uci.edu/ml/datasets/Epileptic+Seizure+Recognition}. The original dataset contains 5 different labels that we binarized into two: seizure (2300 samples) or not seizure (9200 samples). 

\textbf{Intrusion}

The dataset was used for The Third International Knowledge Discovery and Data Mining Tools Competition held at the Conference  on  Knowledge  Discovery  and  Data  Mining,  1999,  and  can  be  found  at \url{http://kdd.ics.uci.edu/databases/kddcup99/kddcup99.html}. We used the file named "kddcup.data10percent.gz" that contains the $10\%$ of the orginal dataset. The goal is to distinguish between intrusion/attack and normal connections categorized in 5 different labels.

%and due to the few examples on each category, we restricted the data to the top four categories.

\textbf{Isolet}

The Isolet dataset contains the features sounds as spectral coefficients, contour features, sonorant features, pre-sonorant features, and post-sonorant features of the different letters on the alphabet as inputs and the corresponding letter as the label. The original dataset can be found at \url{https://archive.ics.uci.edu/ml/datasets/isolet}. However, in our experiments we considered this dataset as a binary classification task as we only considered the labels as constants or vowels.

\tabref{data_description} summarizes the number of samples, labeled classes and type of different inputs (numerical, ordinal or categorical) for each tabular dataset used in our experiments. The content of the table reflects the results after pre-processing or binarizing the corresponding datasets.  

% \begin{wraptable}{r}{0.5\textwidth}
\begin{table}[!h]
\caption{Tabular datasets. Size, number of classes and feature types descriptions.}
\label{tab:data_description}
\vskip 0.1in
\centering
\scalebox{0.85}{
\begin{tabular}{l r c c}
\toprule
% \tabhead{dataset} & \tabhead{$\#$ samps}  & \tabhead{$\#$ classes} & \tabhead{ $\#$ features}  \\
dataset & $\#$ samps  & $\#$ classes & $\#$ features  \\
\midrule
isolet & 4366 & 2 & 617 num \\
covtype & 406698 &  7 & 10 num, 44 cat \\
epileptic & 11500 & 2 & 178 num \\
credit & 284807 & 2 & 29 num \\
cervical & 753 & 2 & 11 num, 24 cat \\
census & 199523 & 2 & 7 num, 33 cat\\
adult & 48842 & 2 & 6 num, 8 cat\\
intrusion & 394021 & 5 & 8 cat, 6 ord, 26 num\\
\bottomrule\\
\end{tabular}

}
  \vspace{-0.5cm}
\end{table}
% \end{wraptable} 

\subsection{Hyperparameters for discrete tabular datasets}\label{supp:hyperparams_discrete}

Here we include the hyperparameters used in obtaining the results obtained in Table \ref{tab:alpha_way}. In the main text we describe the choices of the Hermitian hyperparameters. In the separate section \ref{supp:gamma} we present a broader view over the gamma hyperparameter.

 \begin{table}[H]
 \caption{Hyperparameters for discrete tabular datasets}
\label{tab:discete_hyper}
 \centering
 \begin{tabular}{ccccccc}
 \hline
 & \multicolumn{1}{c}{privacy} & \multicolumn{1}{c}{batch rate} & \multicolumn{1}{c}{order hermite prod} & \multicolumn{1}{c}{prod dimension} & \multicolumn{1}{c}{gamma} & 
 \multicolumn{1}{c}{order hermite}\\
 \hline
 \multirow{2}{*}{Adult} & $\varepsilon=0.3$ & 0.1 & 10 & 5 & 1 & 100\\
 & $\varepsilon=0.1$ & 0.1 & 5 & 7 & 1 & 100\\
 \hline
 \multirow{2}{*}{Census} & $\varepsilon=0.3$ & 0.01 & 5 & 7 & 0.1 & 100\\
 & $\varepsilon=0.1$ & 0.01 & 5 & 7 & 0.1 & 100\\

 \hline
 \end{tabular}
 \end{table}

\subsection{Gamma hyperparameter ablation study}
\label{supp:gamma}

Here we study the impact of gamma $\gamma$ hyperparameter on the quality of the generated samples. Gamma describes the weight that is given to the product kernel in relation to the sum kernel. We elaborate on the results from the Table \ref{tab:alpha_way} which describe $\alpha$-way marginals evaluated on generated samples with discretized Census dataset. We fix all the hyperparameters and vary gamma. The Table \ref{tab:gamma} shows the impact of gamma. The $k-$way results remain indifferent for $\gamma \leq 1$ but deterioriate for $\gamma > 1$. In this experiment, we set $\epsilon_1=\epsilon_2=\epsilon/2$.
Here, ``order hermite prod " means the HP order for the outer product kernel, ``prod dimension" means the number of subsampled input dimensions, and ``order hermite" means the HP order for the sum kernel. 

\begin{table}[H]
\caption{The impact of gamma hyperparamer. }
\label{tab:gamma}
\vskip 0.2cm
\centering
\scalebox{0.95}{
\begin{tabular}{cccccccc}
\toprule
epsilon & batch rate & order hermite prod & prod dimension & gamma & epochs & 3-way & 4-way \\ \midrule
0.3 & 0.1 & 10 & 5 & 0.001 & 8 & 0.474 & 0.570 \\
0.3 & 0.1 & 10 & 5 & 0.01 & 8 & 0.473 & 0.570 \\
0.3 & 0.1 & 10 & 5 & 0.1 & 8 & 0.499 & 0.597 \\
0.3 & 0.1 & 10 & 5 & 1  & 8 & 0.474 & 0.570 \\
0.3 & 0.1 & 10 & 5 & 10  & 8 & 0.585 & 0.671 \\
0.3 & 0.1 & 10 & 5 & 100  & 8 & 0.674 & 0.757 \\
0.3 & 0.1 & 10 & 5 & 1000  & 8 & 0.676 & 0.761 \\
\bottomrule
\end{tabular} 
}
\end{table}

% \label{tab:alpha_way}

\subsection{Training DP-HP generator}\label{supp:tabular_data_DP_HP_hyperparam}

Here we provide the details of the DP-HP training procedure we used on the tabular data experiments. \tabref{data_hyperparam} shows the Hermite polynomial order, the fraction of dataset used in a batch, the number of epochs and the undersampling rate we used during training for each tabular dataset. 

\begin{table}[!ht]
\caption{Tabular datasets. Hyperparameter settings for private constraints $\epsilon=1$ and $\delta=10^{-5}$.}
\label{tab:data_hyperparam}
\vskip 0.1in
\centering
\scalebox{0.85}{
\begin{tabular}{cccccc}
\toprule
data name & batch rate & order hermite prod & prod dimension & order hermite & gamma \\
\midrule
adult & 0.1 & 5 & 5 & 100 & 0.1 \\
census & 0.5 & 5 & 5 & 100 & 0.1 \\
cervical & 0.5 & 13 & 5 & 20 & 1 \\
credit & 0.5 & 7 & 5 & 20 & 1 \\
epileptic & 0.1 & 5 & 7 & 20 & 0.1 \\
isolet & 0.5 & 13 & 5 & 150 & 1 \\ \midrule
covtype & 0.01 & 7 & 2 & 10 & 1 \\
intrusion & 0.01 & 5 & 5 & 7 & 1 \\ 
\bottomrule
\end{tabular} 

}
  \vspace{-0.5cm}
\end{table}
% \end{wraptable}

\subsection{Non-private results}

We also show the non-private MERF and HP results in \tabref{non_priv_as_well_summary_tabular}.
\begin{table*}[h]
\caption{Performance comparison on Tabular datasets. The average over five independent runs. 
% The top six datasets contain binary labels while the bottom two datasets contain multi-class labels. 
}
% \vskip 0.1in
\centering
\scalebox{0.7}{
\begin{tabular}{l *{7}{|cc} }
\toprule
% & real & & DP-CGAN (np)  & MERF (np)  & DP-CGAN (p)  & MERF (p)  \\ \midrule
& \multicolumn{2}{ c| }{Real} &  \multicolumn{2}{ c| }{DP-MERF}  &  \multicolumn{2}{ c| }{DP-HP} & \multicolumn{2}{ c| }{DP-CGAN}  &
\multicolumn{2}{ c| }{DP-GAN}  &
\multicolumn{2}{ c| }{{DP-MERF}} &
\multicolumn{2}{ c }{\textbf{DP-HP}}\\ 
&\multicolumn{2}{ c| }{} &  \multicolumn{2}{ c| }{(non-priv)} & \multicolumn{2}{ c| }{(non-priv)} & \multicolumn{2}{ c| }{($1,10^{-5}$)-DP} & \multicolumn{2}{ c| }{($1,10^{-5}$)-DP} & 
\multicolumn{2}{ c| }{($1,10^{-5}$)-DP} &
\multicolumn{2}{ c }{($1,10^{-5}$)-DP}\\ 
\midrule
\textbf{adult} & 0.786 & 0.683 & 0.642 & 0.525 & \textbf{0,673} & \textbf{0,621} & 0.509 & 0.444 & 0.511 & 0.445 & 0.642 & 0.524 & \textbf{0,688 }& \textbf{0,632} \\
\textbf{census} & 0.776 & 0.433 & 0.696 & 0.244 & \textbf{0,707} & \textbf{0,32} & 0.655 & 0.216 & 0.529 & 0.166 & 0.685 & 0.236 & \textbf{0,699} & \textbf{0,328 }\\
\textbf{cervical} & 0.959 & 0.858 & \textbf{0.863 }& \textbf{0.607} & 0,823 & 0,574 & 0.519 & 0.200 & 0.485 & 0.183 & 0.531 & 0.176 & \textbf{0,616 }& \textbf{0,312 }\\
\textbf{credit} & 0.924 & 0.864 & \textbf{0.902} & 0.828 & 0.89 & \textbf{0,863} & 0.664 & 0.356 & 0.435 & 0.150 & 0.751 & 0.622 & \textbf{0,786} & \textbf{0,744} \\
\textbf{epileptic} & 0.808 & 0.636 & 0.564 & 0.236 & \textbf{0,602} & \textbf{0,546} & 0.578 & 0.241 & 0.505 & 0.196 & 0.605 & 0.316 & \textbf{0,609} &\textbf{ 0,554} \\
\textbf{isolet} & 0.895 & 0.741 & 0.755 & 0.461 &\textbf{ 0,789} & \textbf{0,668} & 0.511 & 0.198 & 0.540 & 0.205 & 0.557 & 0.228 & \textbf{0,572} & \textbf{0,498} \\ 

& \multicolumn{2}{ c| }{F1}
& \multicolumn{2}{ c| }{F1}
& \multicolumn{2}{ c| }{F1}
& \multicolumn{2}{ c| }{F1}
& \multicolumn{2}{ c| }{F1}
& \multicolumn{2}{ c| }{F1}
& \multicolumn{2}{ c }{F1}\\ 
\textbf{covtype} & \multicolumn{2}{ c| }{0.820} &  \multicolumn{2}{ c| }{\textbf{0.601}} &
\multicolumn{2}{ c |}{0.580} &
\multicolumn{2}{ c| }{0.285} &
\multicolumn{2}{ c| }{0.492} &
\multicolumn{2}{ c| }{0.467} &  \multicolumn{2}{ c }{\textbf{0.537}}\\
\textbf{intrusion} & \multicolumn{2}{ c| }{0.971} &  \multicolumn{2}{ c| }{0.884} &
\multicolumn{2}{ c| }{\textbf{0.888}} &
\multicolumn{2}{ c| }{0.302} & 
\multicolumn{2}{ c| }{0.251} &
\multicolumn{2}{ c| }{\textbf{0.892}} & \multicolumn{2}{ c }{0.890} \\ 
\bottomrule
\end{tabular}}\label{tab:non_priv_as_well_summary_tabular}
% \vspace{-15pt}
\end{table*} 

% non_priv_as_well_summary_tabular

% \subsection{DP-HP and DP-MERF comparison}

% In the main text we compared the Hermite polynomial order used in DP-HP and the number of random features used in DP-MERF for the private escenario for each tabular dataset. Here we add the complementary comparison between Hermite orders and number of features for the respective methods for the non-private case in \tabref{tab_order_comparison_nondp}. Note that in the non-private scenario our method also needs a smaller Hermite order compared to the used random features on DP-MERF.

% \input{tables/tab_data_order_comparison_nondp}

\subsection{The effect of subsampled input dimensions for the product kernel on Adult dataset} \label{supp:tradeoff_subsamp_dims}

% Here we study the trade-off for subsampling dimensions in the product kernel in DP-HP. 
\tabref{adult_subsampling_prod_dims} shows the 3-way (Left) and 4-way (Right) marginals evaluated at different number of dimensions for the product kernel ($D_{prod}$) where the rest of hyperparameters are fixed. The results show that increasing the number of dimensions in the product kernel improved the result.

\begin{table*}[h]
\caption{Trade-off for subsampling dimensions in the product kernel for Adult dataset.}
\centering
\begin{tabular}{c||ccc|ccc|}
\hline
   &  & $D_{prod}$&  &  & $D_{prod}$ &  \\ 
 $\epsilon$  & $2$ & $5$ & $7$ &  $2$ & $5$ & $7$ \\ \hline
1 & 0.367& 0.34 & \textbf{0.332} & 0.466&  0.434 & \textbf{0.422} \\ 
\hline
\end{tabular} \label{tab:adult_subsampling_prod_dims}
%\vspace{-0.55cm}
\end{table*}

\section{Image data}\label{supp:image_results}

\subsection{Results by model}

In the following we provide a more detailed description of the downstreams models accuracy over the different methods considered in the image datasets.

\begin{figure}[h!]
 \begin{subfigure}[b]{0.5\textwidth}
 \centering
\includegraphics[width=\textwidth]{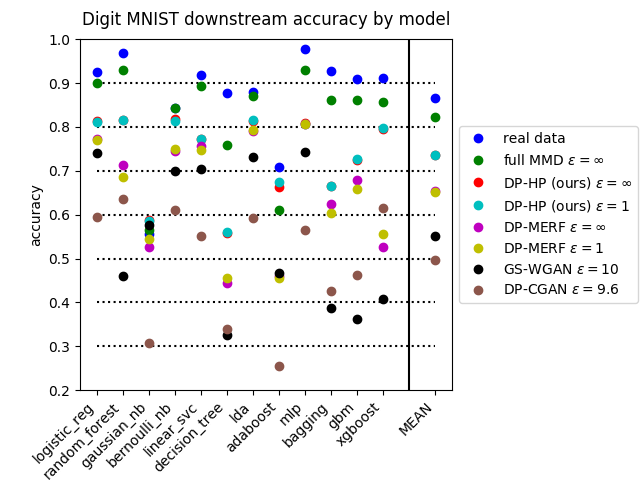}
 \caption{MNIST}
 \label{fig:mnist_by_model_d}
\end{subfigure}
\begin{subfigure}[b]{0.5\textwidth}
 \centering
\includegraphics[width=\textwidth]{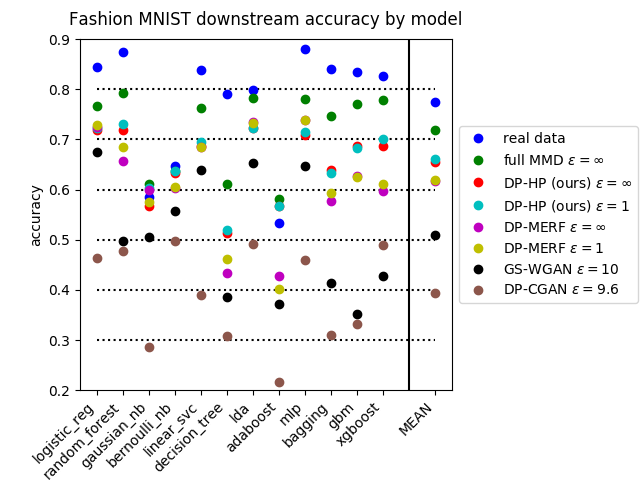}
 \caption{FashionMNIST}
 \label{fig:mnist_by_model_f}
 \end{subfigure}
\caption{We compare the real data test accuracy of models trained on synthetic data for various models: DP-HP, DP-MERF, GS-WGAN and DP-CGAN. As baselines we also include results for real training data and a generator, which is non-privately trained with MMD, listed as "full MMD". We show accuracy sorted by downstream classifier and the mean accuracy across classifiers on the right. Each score is the average of 5 independent runs.}
\label{fig:mnist_by_model}
\end{figure}

\subsection{MNIST and fashionMNIST hyper-parameter settings}\label{supp:hyperparams}

Here we give a detailed hyper-parameter setup and the architectures used for generating synthetic samples via DP-HP for MNIST and FashionMNIST datasets in \tabref{image_hyperparam}. The non-private version of our method does not exhibit a significant accuracy difference between 2, 3 and 4 subsampled dimensions for the product kernel, so we considered product dimension to be 2 for memory savings. \tabref{image_downstream_hyperparam} summarizes the 12 predictive models hyper-parameters setup for the image datasets trained on the generated samples via DP-HP. In this experiment, we optimize this loss $\min_{\vtheta } \; ||\widehat{\vmu}^p_P - \widehat{\vmu}^p_{Q_\vtheta}||_{2}^{2} +  \gamma ||\widehat{\vmu}^s_P - \widehat{\vmu}^s_{Q_\vtheta} ||_{2}^{2}$, where $\gamma$ is multiplied by the sum kernel's loss.  
% \begin{align}\label{eq:aug_mmd_classification}
% \min_{\vtheta } \; \gamma ||\widehat{\vmu}^p_P - \widehat{\vmu}^p_{Q_\vtheta}||_{2}^{2} + ||\widehat{\vmu}^s_P - \widehat{\vmu}^s_{Q_\vtheta} ||_{2}^{2}.
% \end{align} 

% \input{tables/mnist_fashion_hyperparameters} 
\begin{table}[!h]
\caption{Hyperparameter settings for image data experiments. All parameters not listed here are used with their default values.}
\label{tab:image_hyperparam}
\vskip 0.1in
\centering
\scalebox{0.80}{
\begin{tabular}{l *{2}{|cc} }
\toprule
& \multicolumn{2}{ c| }{MNIST}  & \multicolumn{2}{ c }{FashionMNIST} \\
& (non-priv) &
($1,10^{-5}$)-DP & (non-priv) &
($1,10^{-5}$)-DP \\
\midrule
Hermite order (sum kernel) & 100 & 100  & 100 & 100\\
Hermite order (product kernel) & 20 & 20  & 20 & 20\\
kernel length (sum kernel) & 0.005  & 0.005 & 0.15 & 0.15 \\
kernel length (product kernel) & 0.005  & 0.005 & 0.15 & 0.15 \\
product dimension & 2 & 2 & 2 & 2\\
subsample product dimension & beginning of each epoch & beginning of each epoch & beginning of each epoch & beginning of each epoch\\
gamma & 5  & 20  & 20 & 10 \\
mini-batch size  & 200 & 200 & 200 & 200 \\
epochs & 10 & 10 & 10 & 10 \\
learning rate & 0.01 & 0.01 & 0.01 & 0.01 \\
architecture & fully connected  & fully connected & CNN + bilinear upsampling & CNN + bilinear upsampling \\
\bottomrule
\end{tabular}
}
  \vspace{-0.5cm}
\end{table} 

\begin{table}[!h]
\caption{Hyperparameter settings for downstream models used in image data experiments. Models are taken from the scikit-learn 0.24.2 and xgboost 0.90 python packages and hyperparameters have been set to achieve reasonable accuracies while limiting runtimes. Paramters not listed are kept at their default values.}
\label{tab:image_downstream_hyperparam}
\vskip 0.1in
\centering
\scalebox{0.85}{
\begin{tabular}{l|l}
\toprule
Model & Parameters \\
\midrule
Logistic Regression & solver: lbfgs, max\_iter: 5000, multi\_class: auto
\\ 
Gaussian Naive Bayes & - \\
Bernoulli Naive Bayes & binarize: 0.5\\
LinearSVC & max\_iter: 10000, tol: 1e-8, loss: hinge\\
Decision Tree & class\_weight: balanced \\
LDA & solver: eigen, n\_components: 9, tol: 1e-8, shrinkage: 0.5\\
Adaboost & n\_estimators: 1000, learning\_rate: 0.7, algorithm: SAMME.R\\
Bagging & max\_samples: 0.1, n\_estimators: 20\\
Random Forest & n\_estimators: 100, class\_weight: balanced\\
Gradient Boosting & subsample: 0.1, n\_estimators: 50\\
MLP & - \\
XGB & colsample\_bytree: 0.1, objective: multi:softprob, n\_estimators: 50\\
\bottomrule
\end{tabular}
}
  \vspace{-0.5cm}
\end{table}

% model_specs['gbm'] = {'subsample': 0.1, 'n_estimators': 50}
% model_specs['xgboost'] = {'colsample_bytree': 0.1, 'objective': 'multi:softprob', 'n_estimators': 50}
% Collaps

%%%%%%%%%%%%%%%%%%%%%%%%%%%%%%%%%%%%%%%%%%%%%%%%%%%%%%%%%%%%%%%%%%%%%%%%%%%%%%%
%%%%%%%%%%%%%%%%%%%%%%%%%%%%%%%%%%%%%%%%%%%%%%%%%%%%%%%%%%%%%%%%%%%%%%%%%%%%%%%

\end{document}